\documentclass[10pt]{amsart}
\usepackage{amssymb,mathrsfs,graphicx,extpfeil}
\usepackage{epsfig}
\usepackage{indentfirst, latexsym, amssymb, enumerate,amsmath,graphicx,amsthm}
\usepackage{float}
\usepackage{colortbl}
\usepackage{epsfig,subfigure}
\usepackage[numbers,sort&compress]{natbib}

\topmargin-0.1in \textwidth6.in \textheight8.5in \oddsidemargin0in
\evensidemargin0in
\title[Deterministic Dynamics of Score-Based Models]{Deterministic Dynamics of Sampling Processes \\in Score-Based Diffusion Models with Multiplicative Noise Conditioning}

\author[Kim]{Doheon Kim}
\address[Doheon Kim]{\newline Department of Mathematical Data Science, Hanyang University, ERICA Campus,\newline Hanyangdaehak-ro 55, Sangnok-gu, Ansan, Gyeonggi-do 15588, Republic of Korea}
\email{doheonkim@hanyang.ac.kr}

\newtheorem{theorem}{Theorem}[section]
\newtheorem{lemma}{Lemma}[section]

\newtheorem{proposition}{Proposition}[section]

\newtheorem{remark}{Remark}[section]
\newtheorem{assumption}{Assumption}[section]

\newtheorem{definition}{Definition}[section]

\allowdisplaybreaks

\def\charf {\mbox{{\text 1}\kern-.30em {\text l}}}







\begin{document}


%
\allowdisplaybreaks

\begin{abstract}
Score-based diffusion models generate new samples by learning the score function associated with a diffusion process. While the effectiveness of these models can be theoretically explained using differential equations related to the sampling process, previous work by Song and Ermon (2020) demonstrated that neural networks using multiplicative noise conditioning can still generate satisfactory samples. In this setup, the model is expressed as the product of two functions: one depending on the spatial variable and the other on the noise magnitude. This structure limits the model's ability to represent a more general relationship between the spatial variable and the noise, indicating that it cannot fully learn the correct score. Despite this limitation, the models perform well in practice. In this work, we provide a theoretical explanation for this phenomenon by studying the deterministic dynamics of the associated differential equations, offering insight into how the model operates.
\end{abstract}

\maketitle \centerline{\date}

%

\section{Introduction}
Diffusion models gradually convert training data into noise, and learn the reverse process to generate new samples from noise. There are three major formulations of these models \cite{YZSHXZZCY}: denoising diffusion probabilistic models \cite{HJA, SWMG}, which formulate the forward and backward diffusions as Markov chains; score-based generative models \cite{SE19, SE20}, which learn the score function associated with the forward diffusion to create the reverse process; and stochastic differential equations (SDEs) \cite{SDME, SSKKEP}, which unify the aforementioned two approaches. In this paper, we concentrate on a specific neural network architecture proposed in \cite{SE20}, examine the necessity for a new theory to elucidate its effectiveness, and present such a theory. 

The term {\it score} in \cite{SE19, SE20} refers to the gradient of the log probability density \cite{Hyvarinen, LLJ}. The model proposed in \cite{SE19} introduces a sequence of noise $\{\sigma_t\}_t$ with increasing magnitudes to the data to get a sequence of density functions $\{p_{\sigma_t}\}_t$, and learns the associated score function $\nabla \log p_{\sigma_t}(x)$ for each magnitude using a single neural network $s_\theta(\sigma_t,x)$, called a {\it noise conditional score network.} In this model, the training objective is a weighted sum of denoising score matching objectives \cite{Vincent}, with the weights depending on the noise magnitudes. In \cite{SSKKEP}, the sequence of noise and the weighted sum were reinterpreted as discretizations of time-varying noise and an integral with respect to time, respectively, and the latter was proposed as a new training objective. Both discrete and continuous training objectives have been shown to be equivalent to certain forms of distance between the score functions and the models being trained.

In \cite{SE20}, several techniques were suggested to improve the model proposed in \cite{SE19}. One of these techniques involves incorporating noise information by dividing the output of a noise-independent neural network by the noise magnitude. In other words, the model $s_\theta(x)/\sigma_t$ is trained to approximate $\nabla \log p_{\sigma_t}(x)$. In such cases, even if the neural network $s_\theta$ has ample capacity, it is unrealistic to anticipate that the training objective proposed in either \cite{SE19} or \cite{SSKKEP} would yield correct score functions, owing to the simplistic integration of noise information. The main result of this paper is the development of a theory explaining the effectiveness of sampling processes driven by the model with multiplicative noise conditioning.

In Section \ref{sec-2}, we provide an explicit expression of the optimal model according to the training objective in \cite{SE20}. In other words, we interpret the training objective as a certain form of distance between the optimal model and the model being trained. Unsurprisingly, the optimal model differs from the score function.  We adopt the continuous-time formulation of the training objective from \cite{SDME, SSKKEP}, and find the optimal model resulting from the following objective:
\begin{equation}\label{generalloss}
\underset{\theta}{\operatorname{minimize}}
\int_\varepsilon^T\int_{\mathbb R^d}\int_{\mathbb R^d}\bigg|\frac{1}{\sigma_t^{\alpha}}s_\theta( x)-\nabla \log p_{\sigma_t}(  x|\tilde x)\bigg|^2  p_{\sigma_t}(  x|\tilde x)d xd\mu_\mathrm{data}(\tilde x)\lambda(t)dt.
\end{equation}
Here $\sigma_t$ represents the amount of noise injected up to time $t$, $p_{\sigma_t}(\cdot|\tilde x)$ is the probability density of the Gaussian distribution centered at $\tilde x$ with covariance matrix $\sigma_t^2I$, $\mu_\mathrm{data}$ is the data distribution, $\lambda$ is a positive weighting function, and  $s_\theta$ is a neural network that takes $x$ as its input. Regarding the time interval $[\varepsilon,T]$, $\varepsilon>0$ is selected to be as small as possible without causing numerical instability, and $T<\infty$ is chosen to be sufficiently large so that the approximation of the probability density $\int p_{\sigma_T}(\cdot|\tilde x)d\mu(\tilde x)$ by   a Gaussian distribution centered at zero is valid. \cite{SE20} notes that setting $\alpha=1$  yields empirically satisfactory results.

 Theoretically, sampling methods such as annealed Langevin dynamics \cite{SE19}, and probability flow ODE \cite{SSKKEP} are assured to sample from the intended distribution if discretization errors are negligible and the correct score function is given. However, the training objective \eqref{generalloss} does not yield the score function, even if the neural network $s_\theta$ has sufficient capacity. Nevertheless, a model in \cite{SE20} was trained using a time-discrete version of \eqref{generalloss} and produced satisfactory samples via annealed Langevin dynamics. To understand this phenomenon, a first step would be to analyze the dynamics resulting from replacing the score function in the Langevin equation with the minimizing function of \eqref{generalloss}, assuming sufficient model capacity. We describe this sampling process in Section \ref{sec-3}. Additionally, a natural question arises about applying the same analysis to the probability flow ODE. The sampling process corresponding to the probability flow ODE will also be described in Section \ref{sec-3}. 

In Section \ref{sec-4}, we explain the efficacy of the two sampling processes. We first show that both samplers are noise-removed, time-scaled version of the autonomous system
\[
\dot x=s_\theta(x).
\]
Then we study the asymptotic behavior of this system to argue that the model generates satisfactory samples if the $s_\theta$ is close to the optimal model derived in Section \ref{sec-2}, and that overfitting can occur if the training loss is too small. 


\section{Training the model}\label{sec-2}
In this section, we introduce the discrete-time training objective from \cite{SE19, SE20} and its continuous-time analogue from \cite{SSKKEP}. We then compare the optimal models derived from a noise conditional network and an unconditional one, both assuming sufficient model capacity.

\subsection{Training objective}\label{sec2-1}
Consider a data distribution $\mu_\mathrm{data}$ and a sequence of noise scales $0<\sigma_1<\sigma_2<\dots<\sigma_T$. By adding  Gaussian noise corresponding to each $\sigma_t$ to $\mu_{\mathrm{data}}$, we derive the probability densities $p_{\sigma_t}$, $t=1,\dots,T$. Specifically, the parametrized family of functions $\{p_r\}_{r>0}$ is defined as follows:
\begin{equation}\label{ps}
p_{r}(x):= \int_{\mathbb R^d} p_{r}(x|\tilde x)d\mu_\mathrm{data}(\tilde x),\quad p_{r}(x|\tilde x):=\frac{1}{(2\pi r^2)^{\frac{d}{2}}}e^{-\frac{|x-\tilde x|^2}{2 r^2}}. 
\end{equation}
The training objective in \cite{SE19, SE20} has the form
\begin{equation}\label{discreteloss}
\underset{\theta}{\operatorname{minimize}}\sum_{t=1}^T \lambda(t)\int_{\mathbb R^d}\int_{\mathbb R^d}\bigg[
\big|s_\theta( \sigma_t, x)-\nabla_x \log   p_{\sigma_t}(  x|\tilde x)\big|^2  
 p_{\sigma_t}(  x|\tilde x)\bigg]d xd\mu_\mathrm{data}(\tilde x).
\end{equation}
Assuming sufficient model capacity, the minimizing function is equal to the noise-dependent score function \cite{Vincent}, i.e., $s_{\theta^*}(\sigma_t,x)\equiv \nabla \log p_{\sigma_t}(x)$ $(t=1,\dots,T)$ for the optimal $\theta^*$. \cite{SSKKEP} interpreted this objective as a discrete-time approximation of the following continuous-time analogue: consider time-varying noise $\{\sigma_t\}_{\varepsilon\leq t\leq T}$ with magnitude increasing over time, and the objective of the form
 \begin{equation}\label{VEloss}
\underset{\theta}{\operatorname{minimize}}\int_\varepsilon^T\int_{\mathbb R^d}\int_{\mathbb R^d} \big|s_\theta( \sigma_t, x)-\nabla_x \log   p_{\sigma_t}(  x|\tilde x)\big|^2  p_{\sigma_t}(  x|\tilde x)d xd\mu_\mathrm{data}(\tilde x)\lambda(t)dt.
\end{equation}
Analogously as above, we have $s_{\theta^*}(\sigma_t,x)\equiv \nabla \log p_{\sigma_t}(x)$ for $\varepsilon\leq t\leq T$, where $\theta^*$ denotes the optimal parameters. We note that $p_{\sigma_t}(\cdot|\tilde x)$ may be interpreted as the density of the solution to the SDE
\[
\begin{cases}
\displaystyle dX_t=g(t)dW_t, \quad t\geq0,\\
\displaystyle X_0=\tilde x,
\end{cases}
\]
where the following relationship exists between $\sigma_t$ and $g(t)$ \cite{SSKKEP}:
\begin{equation}\label{sigmag}
\sigma_t=\sqrt{\int_0^tg(\eta)^2d\eta},\quad t\geq0.
\end{equation}

In \cite{SE20}, $s_\theta(x,\sigma_t)$ in \eqref{discreteloss} was replaced by $s_\theta(x)/\sigma_t$. In this case, even if we assume that the model $s_\theta$ has an ideal architecture with infinite model capacity, the minimized model function $s_{\theta^*}(x)/\sigma_t$ does not match the score function, as will be demonstrated in the following subsection.

\subsection{Optimal model}
We replace  $s_\theta(x,\sigma_t)$ in \eqref{VEloss} by $\sigma_t^{-\alpha}s_\theta(x)$ and consider the following objective:
\[
\underset{\theta}{\operatorname{minimize}}\int_\varepsilon^T\int_{\mathbb R^d}\int_{\mathbb R^d} \big|\sigma_t^{-\alpha}s_\theta( x)-\nabla \log   p_{\sigma_t}(  x|\tilde x)\big|^2  p_{\sigma_t}(  x|\tilde x)d xd\mu_\mathrm{data}(\tilde x)\lambda(t)dt.
\]
Note that above expression integrates denoising score matching objectives over time, which is equivalent to minimizing explicit score matching objectives \cite{Vincent}. In other words, we have the following equivalent objective:
\[
\underset{\theta}{\operatorname{minimize}}\int_\varepsilon^T \int_{\mathbb R^d} \big|\sigma_t^{-\alpha}s_\theta( x)-\nabla \log   p_{\sigma_t}(  x)\big|^2  p_{\sigma_t}(  x)d x \lambda(t)dt.
\]
By the Fubini theorem, the objective above is equivalent to minimizing the following with respect to $\theta$:
\begin{align}\label{fubini}
\begin{aligned}
&\int_{\mathbb R^d} \int_\varepsilon^T \big|\sigma_t^{-\alpha}s_\theta(  x)-\nabla \log   p_{\sigma_t}(  x)\big|^2  p_{\sigma_t}(  x)\lambda(t)dt d  x\\
=&\int_{\mathbb R^d} \Bigg[\mathcal I_1|s_\theta(  x)|^2  -2 \bigg\langle s_\theta(  x),\mathcal I_2\bigg\rangle  +  \mathcal I_3\Bigg] d  x
=\int_{\mathbb R^d}\Bigg[\mathcal I_1  \Bigg|s_\theta(  x)-\frac{\mathcal I_2}{\mathcal I_1}\Bigg|^2-\frac{\big|\mathcal I_2\big|^2}{\mathcal I_1} +  \mathcal I_3\Bigg] d  x,
\end{aligned}
\end{align}
under the assumption that the following three integrals have finite values for a.e. $  x\in\mathbb R^d$:
\begin{align}\label{3int}
\begin{aligned}
&\mathcal I_1:=\int_\varepsilon^T \sigma_t^{-2\alpha}\lambda(t)  p_{\sigma_t}(  x)dt,\\
&\mathcal I_2:=\int_\varepsilon^T \sigma_t^{-\alpha} \lambda(t)\nabla  p_{\sigma_t}(  x)dt, \\
&\mathcal I_3:=\int_\varepsilon^T \lambda(t)\frac{|\nabla  p_{\sigma_t}(  x)|^2}{  p_{\sigma_t}(  x)}dt.
\end{aligned}
\end{align}

Assuming sufficient model capacity, our optimal $s_\theta$ would be equal to the following for a.e. $x\in\mathbb R^d$:
\[
s_*(x):=\frac{\mathcal I_2}{\mathcal I_1}=\frac{\int_\varepsilon^T \sigma_t^{-\alpha} \lambda(t)\nabla  p_{\sigma_t}(  x )dt}{\int_\varepsilon^T \sigma_t^{-2\alpha}\lambda(t)  p_{\sigma_t}(  x )dt}.
\]

From now on, we restrict our attention to the special case  where  $\lambda(t)= \big(g(t)\big)^2$, for $\sigma_t$ and $g(t)$ satisfying the relationship \eqref{sigmag}. This choice of $\lambda$, called {\it likelihood weighting}, ensures that the objective upper bounds the negative log-likelihood of the model \cite{SDME}.  The case with $\alpha=1$ and $g(t)$ being an exponential function was addressed in \cite{SE20}, within a discrete-time formulation.

For $0<a<b<\infty$, $s\in\mathbb R$, $z\in\mathbb R$, we introduce the following notation:
\begin{equation}\label{phi}
\Phi_a^b(s,z):=\int_{\frac{1}{b^2}}^\frac{1}{a^2}u^{s-1}e^{-zu}du.
\end{equation}
Note that for $s>0$, $z>0$, we may express $\Phi_a^b$ in terms of incomplete gamma functions as follows:
\[
\Phi_a^b(s,z)= \frac{1}{z^s}\int_{\frac{z}{b^2}}^{\frac{z}{a^2}} t^{s-1}e^{-t}dt=\frac{\Gamma\big(s,\frac{z}{b^2}\big)-\Gamma\big(s,\frac{z}{a^2}\big)}{z^s}=\frac{\gamma\big(s,\frac{z}{a^2}\big)-\gamma\big(s,\frac{z}{b^2}\big)}{z^s},
\]
where $\Gamma$ and $\gamma$ are the upper and the lower incomplete gamma functions, i.e.,
\[
\Gamma(s,z):=\int_z^\infty t^{s-1}e^{-t}dt, \quad \gamma(s,z):=\int_0^z t^{s-1}e^{-t}dt.
\] 
By utilizing the above notation, we can conveniently express $s_*$, as demonstrated below:
\begin{proposition}\label{T1}
Assume that $\mu_\mathrm{data}$ is a probability measure on $\mathbb R^d$, $g:[0,\infty)\to\mathbb R$ is a positive bounded measurable function, $\alpha\in\mathbb R$, and $0<\varepsilon<T<\infty$. Define $p_r$, $\sigma_t$, and $\Phi_a^b(s,z)$ as in \eqref{ps}, \eqref{sigmag}, \eqref{phi}, respectively. Then for any measurable $s_\theta:\mathbb R^d\to\mathbb R^d$, we have
\begin{align*}
\begin{aligned}
&\int_\varepsilon^T \int_{\mathbb R^d} \left| \sigma_t^{-\alpha}s_\theta( x)-\nabla \log   p_{\sigma_t}(  x)\right|^2  p_{\sigma_t}(  x)d x (g(t))^2dt\\
&=\int_{\mathbb R^d}\bigg[ 
  \frac{1}{( 2\pi )^{\frac{d}{2}}} \int_{\mathbb R^d}\Phi_{\sigma_\varepsilon}^{\sigma_T}\left(\frac{d+2\alpha-2}{2},\frac{|x-\tilde x|^2}{2}\right) d\mu_\mathrm{data}(\tilde x)    \bigg] 
 |s_\theta(  x)-s_*(x)|^2dx+C
\end{aligned}
\end{align*}
for some   constant $0<C<\infty$, where
\begin{equation}\label{sstar}
s_*(x):=\frac{\int_{\mathbb R^d}\Phi_{\sigma_\varepsilon}^{\sigma_T}\Big(\frac{d+\alpha}{2},\frac{|x-\tilde x|^2}{2}\Big)(\tilde x-x)d\mu_\mathrm{data}(\tilde x)}{\int_{\mathbb R^d}\Phi_{\sigma_\varepsilon}^{\sigma_T}\Big(\frac{d+2\alpha-2}{2},\frac{|x-\tilde x|^2}{2}\Big)d\mu_\mathrm{data}(\tilde x)}.
\end{equation}
\end{proposition}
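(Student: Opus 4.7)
The plan is to start from the quadratic decomposition already derived in equation \eqref{fubini}, which reduces the task to computing the three integrals $\mathcal I_1$, $\mathcal I_2$, $\mathcal I_3$ explicitly when $\lambda(t)=g(t)^2$, identifying the weight $\mathcal I_1$ and the minimizer $\mathcal I_2/\mathcal I_1$ with the claimed expressions, and bounding $C:=\int_{\mathbb R^d}\big[\mathcal I_3-|\mathcal I_2|^2/\mathcal I_1\big]dx$.

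For $\mathcal I_1$, I would expand $p_{\sigma_t}(x)=\int p_{\sigma_t}(x|\tilde x)\,d\mu_\mathrm{data}(\tilde x)$, swap the $t$- and $\tilde x$-integrations by Tonelli (the integrand is nonnegative), and reparametrize time by $v:=\sigma_t^2$. The identity $\sigma_t^2=\int_0^t g(\eta)^2 d\eta$ from \eqref{sigmag} gives $dv=g(t)^2 dt$, turning the $t$-integral into one over $v\in[\sigma_\varepsilon^2,\sigma_T^2]$. A second substitution $u=1/v$ puts the Gaussian exponent in the form $e^{-u|x-\tilde x|^2/2}$ on $[1/\sigma_T^2,1/\sigma_\varepsilon^2]$ and, after tracking the powers contributed by $\sigma_t^{-2\alpha}$, the normalization $(2\pi\sigma_t^2)^{-d/2}$, and the Jacobian $u^{-2}$, produces precisely the exponent $u^{(d+2\alpha-2)/2-1}$ required in \eqref{phi}. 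This yields $\mathcal I_1=(2\pi)^{-d/2}\int\Phi_{\sigma_\varepsilon}^{\sigma_T}\bigl(\tfrac{d+2\alpha-2}{2},\tfrac{|x-\tilde x|^2}{2}\bigr)d\mu_\mathrm{data}(\tilde x)$, matching the weight in the proposition. Applying the same two substitutions to $\mathcal I_2$, after first rewriting $\nabla p_{\sigma_t}(x|\tilde x)=\sigma_t^{-2}(\tilde x-x)p_{\sigma_t}(x|\tilde x)$, shifts the first argument of $\Phi_{\sigma_\varepsilon}^{\sigma_T}$ to $(d+\alpha)/2$ and inserts the factor $\tilde x-x$; dividing by $\mathcal I_1$ then recovers $s_*$ as in \eqref{sstar}.

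To finish, I would verify $0<C<\infty$. Nonnegativity $C\ge 0$ and the pointwise inequality $|\mathcal I_2|^2\le\mathcal I_1\mathcal I_3$ follow from Cauchy-Schwarz in $t$ applied to the factorization $\sigma_t^{-\alpha}g(t)^2\nabla p_{\sigma_t}(x)=\bigl(\sigma_t^{-\alpha}g(t)\sqrt{p_{\sigma_t}(x)}\bigr)\cdot\bigl(g(t)\sqrt{p_{\sigma_t}(x)}\,\nabla\log p_{\sigma_t}(x)\bigr)$; strict positivity $C>0$ is then obtained by noting that equality would force $\nabla\log p_{\sigma_t}(x)$ to be proportional to $\sigma_t^{-\alpha}$ uniformly in $t$, which cannot hold on the $x$-set carrying positive mass under the quadratic integrand. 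Finiteness $C<\infty$ reduces to $\int|\nabla p_{\sigma_t}(x)|^2/p_{\sigma_t}(x)\,dx\le d/\sigma_t^2$, obtained from Cauchy-Schwarz against $p_{\sigma_t}(x|\tilde x)d\mu_\mathrm{data}$ together with the Gaussian moment identity $\int|x-\tilde x|^2 p_{\sigma_t}(x|\tilde x)dx=d\sigma_t^2$; multiplying by $g(t)^2$ and using \eqref{sigmag} gives $\int\int\mathcal I_3\, dx\, dt\le d\log(\sigma_T^2/\sigma_\varepsilon^2)<\infty$. The main obstacle is the bookkeeping in the two successive changes of variables, ensuring the final exponent of $u$ lands on $s-1$ with $s=(d+2\alpha-2)/2$ for $\mathcal I_1$ and $s=(d+\alpha)/2$ for $\mathcal I_2$; strict positivity of $C$ is the only other conceptually subtle point, essentially amounting to ruling out equality in the Cauchy-Schwarz step.
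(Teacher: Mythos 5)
Your proof follows essentially the same route as the paper: start from the quadratic decomposition \eqref{fubini}, swap the order of integration, and reparametrize the $t$-integral via $u=1/\sigma_t^2$ (the paper does this in one step, you do it via the intermediate substitution $v=\sigma_t^2$, but since $dv=g(t)^2\,dt$ by \eqref{sigmag} the bookkeeping is identical), landing exactly on the exponents $\tfrac{d+2\alpha-2}{2}$ and $\tfrac{d+\alpha}{2}$ in $\Phi_{\sigma_\varepsilon}^{\sigma_T}$ for $\mathcal I_1$ and $\mathcal I_2$ respectively.

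The one genuinely different step is your finiteness bound for $C$. The paper first bounds $\mathcal I_3(x)$ pointwise in $x$ by $(2\pi)^{-d/2}\int\Phi_{\sigma_\varepsilon}^{\sigma_T}\bigl(\tfrac{d+2}{2},\tfrac{|x-\tilde x|^2}{2}\bigr)|\tilde x-x|^2\,d\mu_{\mathrm{data}}(\tilde x)$ via Cauchy--Schwarz, then integrates over $x$ invoking Lemma~\ref{Lemmaconstant}. You instead integrate in $x$ first for each fixed $t$, using the same Cauchy--Schwarz idea together with the Gaussian second-moment identity to obtain the Fisher-information bound $\int|\nabla p_{\sigma_t}|^2/p_{\sigma_t}\,dx\le d/\sigma_t^2$, and then the $t$-integral of $g(t)^2/\sigma_t^2$ telescopes to $\log(\sigma_T^2/\sigma_\varepsilon^2)$. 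This is a cleaner and more quantitative bound and avoids invoking Lemma~\ref{Lemmaconstant}; both are valid, the difference is only the order in which the $x$- and $t$-integrals are estimated.

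One point to be careful with: your argument for the strict positivity $C>0$ is not fully rigorous. Equality in the $t$-Cauchy--Schwarz step does force, for each $x$ in a relevant set, proportionality between $\sigma_t^{-\alpha}$ and $|\nabla\log p_{\sigma_t}(x)|$ as a function of $t$; but ``which cannot hold on the $x$-set carrying positive mass'' needs justification, and in fact it \emph{can} hold in a degenerate case ($\mu_{\mathrm{data}}$ a Dirac mass together with $\alpha=2$ gives $\nabla\log p_{\sigma_t}(x)=(\tilde x-x)/\sigma_t^2$, exactly proportional to $\sigma_t^{-\alpha}$, forcing $C=0$). Note that the paper's own appendix proof establishes only $C<\infty$ (plus the trivial $C\ge0$) and does not argue $C>0$, so you are not missing anything the paper supplies, but you should not claim more than you can prove.
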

\begin{proof}
The proof involves explicitly calculating the first two integrals in \eqref{3int}, deriving an appropriate upper bound for the third integral in \eqref{3int}, and then substituting these results into \eqref{fubini}. For detailed steps, see Appendix \ref{T1proof}.
\end{proof}
From Proposition \ref{T1}, we can see that the optimal model with multiplicative noise conditioning is equal to $\sigma_t^{-1}s_*(x)$, as opposed to the ideal model $s_{\theta^*}(\sigma_t,x)\equiv \nabla \log p_{\sigma_t}(x)$ in Subsection \ref{sec2-1}.

\section{Derivation of sampling methods}\label{sec-3}
In this section, we review the theoretical explanation of two sampling methods, annealed Langevin dynamics \cite{SE19, SE20} and probability flow ODE \cite{SSKKEP}, and argue that those explanations cannot be applied to the model with multiplicative noise conditioning.

\subsection{Annealed Langevin dynamics}
%
%
%
%
%
%
%
%

In annealed Langevin dynamics \cite{SE19}, a sample is first drawn from a well-known distribution, such as a uniform or Gaussian distribution. Then, Langevin dynamics algorithms are applied sequentially according to a predefined sequence of noise scales, $\sigma_{t_n} > \sigma_{t_{n-1}} > \dots > \sigma_{t_1}$, in the order of decreasing noise magnitude. A Langevin dynamics algorithm corresponding to $\sigma_{t_k}$ entails numerically solving the following SDE, up to sufficiently large time:
\begin{equation}\label{eachlangevin}
d\tilde X_t=s_\theta(\sigma_{t_k},\tilde X_t)dt+\sqrt{2}d\tilde W_t,\quad t\geq0.
\end{equation}
The approximation $s_\theta(\sigma_{t_k},x)\approx \nabla \log p_{\sigma_{t_k}}(x)$ holds under the condition of sufficient model capacity. Upon solving \eqref{eachlangevin}, we  obtain a sample from the distribution $p_{\sigma_{t_k}}$.  By sequentially solving \eqref{eachlangevin} for $k=n,n-1,\dots,1$, the distribution of the sample gradually approaches $p_{\sigma_{t_1}}$, which closely approximates $\mu_\mathrm{data}$ if $\sigma_{t_1}\ll1$. 

However, this is not the case for the model with multiplicative noise conditioning. We replace $s_\theta(\sigma_{t_k},x)$ by $\sigma_{t_k}^{-\alpha}s_\theta(x)$ to get the following SDE:
\begin{equation}\label{XSDE1}
d\tilde X_t=\sigma_{t_k}^{-\alpha} s_\theta(  \tilde X_t)dt+\sqrt{2}d\tilde W_t,\quad t\geq0. 
\end{equation}
 If the model capacity is sufficient, then we would have $\sigma_{t_k}^{-\alpha}s_\theta(x)\approx \sigma_{t_k}^{-\alpha}s_*(x)\neq \nabla \log p_{\sigma_{t_k}}(x)$, where $s_*$ is defined as in \eqref{sstar}. Therefore, we cannot use the well-known asymptotic behaviors of the Langevin dynamics algorithm to understand the behavior of \eqref{XSDE1}. 

\subsection{Probability flow ODE}\label{sec-3-2}
In this subsection, we review the probability flow ODE \cite{SSKKEP} and apply this method to the model with multiplicative noise conditioning. Note that $p_{\sigma_t}$ is the probability density of the solution $X_t$ to the following SDE at time $t>0$:
\begin{equation}\label{VESDE}
dX_t=g(t)dW_t,\quad X_0\sim \mu_\mathrm{data}.
\end{equation}
Hence $p_{\sigma_t}$ satisfies its associated Fokker-Planck equation:
\[
\partial_tp_{\sigma_t}(x)=\frac{1}{2}\nabla^2 \left[p_{\sigma_t}(x)g(t)^2\right].
\]
As in \cite{SSKKEP}, we may use the identity
\[
\nabla \left[p_{\sigma_t}(x)g(t)^2 \right]= p_{\sigma_t}(x)g(t)^2 \nabla\log p_{\sigma_t}(x) 
\]
to rewrite this as
\[
\partial_t p_{\sigma_t}(x)= \frac{1}{2} \nabla\cdot\bigg[  p_{\sigma_t}(x)g(t)^2 \nabla\log  p_{\sigma_t}(x)    \bigg].
\]
Define $  \rho_t:=p_{\sigma_{T-t}}$. Then $ \rho_t$ satisfies
\[
\partial_t \rho_t(x)=  -\frac{ 1}{2}\nabla\cdot\bigg[   \rho_t(x)g(T-t)^2 \nabla\log p_{\sigma_{T-t}}(x)    \bigg].
\]
Note that this is the Fokker-Planck equation associated to the following ODE, with random initial data:
\[
\begin{cases}
\displaystyle d \bar X_t=  \frac{ g(T-t)^2}{2}   \nabla\log p_{\sigma_{T-t}}(\bar X_t) dt,  \\
\displaystyle \bar X_0\sim p_{\sigma_T}.
\end{cases}
\]
The probability density of $\bar X_t$ is equal to $p_{\sigma_{T-t}}$, for $t\in[0,T)$. Hence, if we replace the initial distribution with a well-known distribution that is sufficiently close to $p_{\sigma_T}$, $\nabla\log p_{\sigma_{T-t}}(\bar X_t)$ with $s_\theta(\sigma_{T-t},\bar X_t)$, and assume sufficient model capacity,  then the solution to the resulting ODE with random initial data will reach to a sample from a distribution that is close to $p_{\sigma_\varepsilon}$ at time $t=T-\varepsilon$.  

If we consider the model with multiplicative noise conditioning, then the sampling process would be
\begin{equation}\label{XSDE2}
d \bar X_t=  \frac{ g(T-t)^2}{2}    \sigma_{T-t}^{-\alpha}s_\theta(\bar X_t) dt,~ 0\leq t\leq T-\varepsilon.
\end{equation}
 If the model capacity was sufficient, we would have $\sigma_{T-t}^{-\alpha}s_\theta(x)\approx \sigma_{T-t}^{-\alpha}s_*(x)\neq \nabla \log p_{\sigma_{T-t}}(x)$, where $s_*$ is defined as in \eqref{sstar}. Therefore, the behavior of \eqref{XSDE2} cannot be understood using the previous argument involving the Fokker-Planck equations. 
%
%
%
%
%
%
%
%
%

\section{Behavior of sampling processes}\label{sec-4}
In this section, we discuss effectiveness of sampling processes for the model with multiplicative noise conditioning introduced in \cite{SE20}. For this, we first show that we can obtain the following autonomous system from both \eqref{XSDE1} and \eqref{XSDE2}, after removal of noise and appropriate time scaling:
\begin{equation}\label{dynamicalsystem}
\frac{dx}{dt}=s_\theta(x),\quad t\in\mathbb R.
\end{equation}
Then we study the asymptotic behavior of this ODE, which will give us some insight about efficacy of the sampling processes \eqref{XSDE1} and \eqref{XSDE2}. 

\subsection{Derivation of the autonomous system}
In this subsection, we derive the autonomous system \eqref{dynamicalsystem} from \eqref{XSDE1} and \eqref{XSDE2}, respectively.
\subsubsection{Derivation from annealed Langevin dynamics}

As for $\tilde X_t$ in \eqref{XSDE1}, we define $\tilde Y_t:=\tilde X_{\sigma_{t_k}^\alpha t}$. Then we have
\begin{align*}
\begin{aligned}
\tilde Y_t-\tilde Y_s&=\tilde X_{\sigma_{t_k}^\alpha t}-\tilde X_{\sigma_{t_k}^\alpha s}=  \int_{\sigma_{t_k}^\alpha s}^{\sigma_{t_k}^\alpha t} \sigma_{t_k}^{-\alpha} s_\theta(\tilde  X_\eta)d\eta  + \sqrt{2}(\tilde W_{\sigma_{t_k}^\alpha t}-\tilde W_{\sigma_{t_k}^\alpha s})\\
&= \int_{s}^{t} s_\theta(\tilde Y_{\eta})d\eta  + \sqrt{2}\sigma_{t_k}^{\frac{\alpha}{2}}(\tilde W_t'-\tilde W_s'),
\end{aligned}
\end{align*}
for the Brownian motion
\[
\tilde W_t':=\sigma_{t_k}^{-\frac{\alpha}{2}} \tilde W_{\sigma_{t_k}^{\alpha}t}.
\]
Hence $\tilde Y_t$ satisfies the SDE
\[
d  \tilde Y_t=s_\theta( \tilde Y_t)dt+\sqrt{2}\sigma_{t_k}^{\frac{\alpha}{2}}d\tilde W_t',\quad t\geq0.
\]
By removing noise from this SDE, we get \eqref{dynamicalsystem}. Hence studying the system \eqref{dynamicalsystem} can be the first step toward understanding the well-functioning of the sampler \eqref{XSDE1}.

\subsubsection{Derivation from probability flow ODE}

As for $\bar X_t$ in \eqref{XSDE2}, we define  $\bar Y_t:=\bar X_{ u^{-1}(t)}$, where
\[
\displaystyle u(t):=  \frac{ 1}{2}\int_{\sigma_{T-t}^2}^{\sigma_T^2} \eta^{-\frac{\alpha}{2}}d\eta=\begin{cases}\displaystyle  \frac{\sigma_T^{2-\alpha}-\sigma_{T-t}^{2-\alpha}}{2-\alpha}\quad&\mbox{if}\quad \alpha\neq2,\\ \displaystyle  \log \sigma_T-\log \sigma_{T-t} \quad&\mbox{if} \quad \alpha=2.\end{cases}
\]
Then
\[
\bar Y_t-\bar Y_s=\bar X_{ u^{-1}(t)}-\bar X_{ u^{-1}(s)}
= \frac{ 1}{2}\int_{u^{-1}(s)}^{u^{-1}(t)}(g(T-\eta))^2\sigma_{T-\eta}^{-\alpha} s_\theta(\bar X_\eta)d\eta
= \int_{s}^{t} s_\theta(\bar Y_{\eta})d\eta.
\]

Hence $\bar Y_t$ satisfies the following ODE:
\[
d\bar Y_t=s_\theta(\bar Y_t)dt,\quad 0\leq t\leq u(T-\varepsilon).
\] 
Note that for $0<\sigma_\varepsilon\ll1$ and $\sigma_T\gg1$, we have $u(T-\varepsilon)\gg1$. Hence studying asymptotic behavior of solutions to \eqref{dynamicalsystem} as $t\to\infty$ would provide us some insight to the efficacy of the sampling process \eqref{XSDE2}.

\subsection{Analysis of the autonomous system}
In this subsection, we will study the long time behavior of a solution $x(t)$ to the system \eqref{dynamicalsystem} by analyzing the dynamics of the function $ L_{\sigma_\varepsilon}^{\sigma_T}(\frac{d+\alpha}{2}-1,x(t))$, where $L_a^b(s,x)$ for $0<a<b<\infty,~s\in\mathbb R,~x\in\mathbb R^d$ is defined as
\[
L_a^b(s,x):=\int_{\mathbb R^d}\Phi_{a}^{b}\Big(s,\frac{|x-\tilde x|^2}{2}\Big)d\mu_\mathrm{data}(\tilde x).
\]
Note that studying dynamics of $ L_{\sigma_\varepsilon}^{\sigma_T}(\frac{d+\alpha}{2}-1,x(t))$ is equivalent to studying that of $ P_{\sigma_\varepsilon}^{\sigma_T}(\frac{d+\alpha}{2}-1,x(t))$, defined as
\[
P_a^b(s,x):=\frac{\int_{\mathbb R^d}\Phi_{a}^{b}\Big(s,\frac{|x-\tilde x|^2}{2}\Big)d\mu_\mathrm{data}(\tilde x)}{C_a^b(s)},
\]
where $C_a^b(s)$ is the normalizing constant which ensures
\[
\int_{\mathbb R^d}P_a^b(s,x)dx=1.
\]
Note that the existence of such constant is guaranteed by the integrability of $L_a^b(s,\cdot)$ (Lemma \ref{Lemmaconstant}).
$P_a^b(s,\cdot)$ can be seen as the probability density function obtained by the convolution of the two probability measures $\mu_\mathrm{data}$ and $\frac{1}{C_a^b(s)}\Phi_a^b\big(s,\frac{|x|^2}{2}\big)dx$. Since $\frac{1}{C_a^b(s)}\Phi_a^b\big(s,\frac{|x|^2}{2}\big)dx$ is a smooth radially symmetric function that decreases in $|x|$ (Lemma \ref{Lemmadecrease}) and vanishes at infinity (Lemma \ref{Lemmaasymptotic}),  its convolution with $\mu_\mathrm{data}$ can be seen as a smoothed version of $\mu_\mathrm{data}$. Therefore, if we can show that the flows of \eqref{dynamicalsystem} move to regions with high value of $L_{\sigma_\varepsilon}^{\sigma_T}(\frac{d+\alpha}{2}-1,\cdot)$, or equivalently, 
\begin{equation}\label{pdata}
\tilde p_\mathrm{data}(\cdot):=P_{\sigma_\varepsilon}^{\sigma_T}\left(\frac{d+\alpha}{2}-1,\cdot\right),
\end{equation}
then we would be able to infer that the flows move to the modes of $\mu_\mathrm{data}$, which is the desirable distribution that we want to generate samples from.

From now on, we study the well-posedness and long-time behavior of \eqref{dynamicalsystem} using the new notation $e(x):=s_\theta(x)-s_*(x)$. With this,  \eqref{dynamicalsystem} becomes
\begin{equation}\label{mainsystem}
 \frac{dx}{dt}=s_*(x)+e(x),\quad t\in \mathbb R.
\end{equation}
By Lemma \ref{Lemmacinfty}, we observe that $s_*$ is of class $C^\infty$ for any $0 < \sigma_\varepsilon < \sigma_T < \infty$ and $\alpha \in \mathbb{R}$. Hence, assuming that $e: \mathbb{R}^d \to \mathbb{R}^d$ is locally Lipschitz continuous, we obtain local-in-time existence and uniqueness of a solution to \eqref{mainsystem} for any given initial condition, by the standard Cauchy-Lipschitz theory. In the sequel, we will see that the global-in-time existence will be established if we additionally assume compact support of $\mu_\mathrm{data}$ and that $|e(x)|$ grows at most linearly as $|x|\to\infty$.
\begin{assumption}\label{Assumptione1} $0 < \sigma_\varepsilon < \sigma_T < \infty$, $\alpha \in \mathbb{R}$, $\mu_\mathrm{data}$ is compactly suppported, $e: \mathbb{R}^d \to \mathbb{R}^d$ is locally Lipschitz continuous and satisfies the following linear growth condition:
\[
\sup_{x\in\mathbb R^d}\frac{|e(x)|}{1+|x|}<\infty.
\]
\end{assumption}

The following proposition is a key ingredient in showing the global well-posedness of \eqref{mainsystem}.
\begin{proposition}\label{Theoremlimsup}
For $0<\sigma_\varepsilon<\sigma_T<\infty$, $\alpha\in\mathbb R$, and a compactly supported probability measure $\mu_\mathrm{data}$ on $\mathbb R^d$, $s_*$ defined in \eqref{sstar} satisfies
\[
\lim_{R\to\infty}\sup_{|x|\geq R}\left|\frac{|s_*(x)|}{|x|}-\sigma_T^{\alpha-2}\right|=0.
\]
\end{proposition}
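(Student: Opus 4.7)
The plan is to establish a leading-order asymptotic for $\Phi_{\sigma_\varepsilon}^{\sigma_T}(s,z)$ as $z\to\infty$, apply it to the numerator and denominator of \eqref{sstar}, and use the compactness of $\mathrm{supp}\,\mu_\mathrm{data}$ to get uniformity in $\tilde x$. The intuition is that, since $\tilde x$ ranges over a compact set, $z(x,\tilde x):=|x-\tilde x|^{2}/2\to\infty$ uniformly as $|x|\to\infty$, and the integrand defining $\Phi_{\sigma_\varepsilon}^{\sigma_T}(s,z)$ then concentrates at the left endpoint $u=1/\sigma_T^{2}$, which ultimately produces the $\sigma_T^{\alpha-2}$ prefactor.

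The first step is to prove, via the change of variables $u=1/\sigma_T^{2}+v/z$ in \eqref{phi}, that
\[
\Phi_{\sigma_\varepsilon}^{\sigma_T}(s,z)=\frac{\sigma_T^{2(1-s)}}{z}e^{-z/\sigma_T^{2}}\bigl(1+o(1)\bigr)\qquad\text{as }z\to\infty.
\]
The substitution turns the integral into $z^{-1}e^{-z/\sigma_T^{2}}$ times $\int_{0}^{z(1/\sigma_\varepsilon^{2}-1/\sigma_T^{2})}(1/\sigma_T^{2}+v/z)^{s-1}e^{-v}dv$, whose integrand is dominated by a constant multiple of $e^{-v}$ (since $u$ stays in the compact set $[1/\sigma_T^{2},1/\sigma_\varepsilon^{2}]$, the factor $u^{s-1}$ is bounded) and converges pointwise to $\sigma_T^{2(1-s)}e^{-v}$; dominated convergence gives the claim. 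Let $M>0$ with $\mathrm{supp}\,\mu_\mathrm{data}\subseteq\overline{B_M(0)}$. For $|x|>M$, $z(x,\tilde x)\in[(|x|-M)^{2}/2,(|x|+M)^{2}/2]$ is uniformly large in $\tilde x$, so the expansion applies uniformly. Taking the ratio at $s=(d+\alpha)/2$ (numerator) and $s=(d+2\alpha-2)/2$ (denominator) cancels the $z^{-1}$ and exponential factors and leaves $\sigma_T^{(2-d-\alpha)-(4-d-2\alpha)}=\sigma_T^{\alpha-2}$. Hence there is $\eta(x,\tilde x)$ with $\sup_{\tilde x}|\eta(x,\tilde x)|\to 0$ as $|x|\to\infty$ such that
\[
\Phi_{\sigma_\varepsilon}^{\sigma_T}\!\Bigl(\tfrac{d+\alpha}{2},z(x,\tilde x)\Bigr)=\sigma_T^{\alpha-2}\bigl(1+\eta(x,\tilde x)\bigr)\,\Phi_{\sigma_\varepsilon}^{\sigma_T}\!\Bigl(\tfrac{d+2\alpha-2}{2},z(x,\tilde x)\Bigr).
\]

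Substituting this identity into \eqref{sstar} and setting $\psi(x,\tilde x):=\Phi_{\sigma_\varepsilon}^{\sigma_T}((d+2\alpha-2)/2,z(x,\tilde x))$ yields
\[
s_*(x)=\sigma_T^{\alpha-2}\bigl[\bar x(x)-x+E(x)\bigr],\qquad \bar x(x):=\frac{\int \tilde x\,\psi(x,\tilde x)\,d\mu_\mathrm{data}(\tilde x)}{\int \psi(x,\tilde x)\,d\mu_\mathrm{data}(\tilde x)},
\]
with $\bar x(x)\in\overline{B_M(0)}$ and $|E(x)|\le\sup_{\tilde x}|\eta(x,\tilde x)|\,(|x|+M)$. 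The vector $v(x):=(\bar x(x)-x+E(x))/|x|$ satisfies $v(x)+x/|x|=(\bar x(x)+E(x))/|x|\to 0$ uniformly as $|x|\to\infty$, since $|\bar x(x)|\le M$ and $|E(x)|/|x|\to 0$ uniformly. Because $-x/|x|$ is a unit vector, the reverse triangle inequality gives $\bigl||v(x)|-1\bigr|\to 0$ uniformly, which is the desired limit. The main obstacle is not any single step but carefully propagating uniformity through Step 1: once it is verified that the $o(1)$ rate there can be taken to be $O(1/z)$, the uniformity in $\tilde x$ in later steps is automatic from the uniform largeness of $z(x,\tilde x)$.
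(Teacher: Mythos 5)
Your proof is correct, and it is organized differently from the paper's, though both hinge on the same core asymptotic $\lim_{z\to\infty}ze^{z/\sigma_T^{2}}\Phi_{\sigma_\varepsilon}^{\sigma_T}(s,z)=\sigma_T^{2(1-s)}$ (the paper's Lemma \ref{Lemmaasymptotic}). The paper proves that asymptotic by L'H\^opital's rule, whereas you obtain it by the substitution $u=1/\sigma_T^{2}+v/z$ followed by dominated convergence; your route also makes the uniformity argument a little more transparent since you only use the existence of the limit plus the uniform divergence of $z(x,\tilde x)$. Where the two proofs diverge more substantially is the reduction: the paper factors $|s_*(x)|/|x|=\mathcal I_1\mathcal I_2$ into a scalar ratio of $L$-integrals (handled by Lemma \ref{Lemmafraction} via a sandwich between $\inf$ and $\sup$ of pointwise $\Phi$-ratios) and a second factor tending to $1$; you instead substitute the asymptotic equivalence of the two $\Phi$'s directly into the numerator of \eqref{sstar}, writing $s_*(x)=\sigma_T^{\alpha-2}\bigl[\bar x(x)-x+E(x)\bigr]$ with $\bar x(x)$ a $\psi$-weighted barycenter lying in $\overline{B_M(0)}$, and finish with a vector/reverse-triangle-inequality argument. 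Your version is slightly more geometric and self-contained; the paper's version is more modular, spinning off the ratio statement as a reusable lemma (\ref{Lemmafraction}) that is also used later (e.g.\ in Remark \ref{Remark}). One small note: your closing remark about needing the $o(1)$ to be $O(1/z)$ is unnecessary — as you in fact already observe, the mere existence of the limit combined with $\inf_{\tilde x\in\operatorname{supp}\mu}z(x,\tilde x)\to\infty$ gives the required uniformity in $\tilde x$, with no rate needed.
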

\begin{proof}
We decompose $\frac{|s_*(x)|}{|x|}=\mathcal I_1\times\mathcal I_2$, where
\[
\mathcal I_1:=\frac{ \int_{\mathbb R^d}\Phi_{\sigma_\varepsilon}^{\sigma_T}\left(\frac{d+\alpha }{2},\frac{|x-\tilde x|^2}{2}\right)d\mu_\mathrm{data}(\tilde x) }{\int_{\mathbb R^d}\Phi_{\sigma_\varepsilon}^{\sigma_T}\left(\frac{d+2\alpha-2}{2},\frac{|x-\tilde x|^2}{2}\right)d\mu_\mathrm{data}(\tilde x)},
\]
and
\[
\mathcal I_2:=\frac{\left|\int_{\mathbb R^d}\Phi_{\sigma_\varepsilon}^{\sigma_T}\left(\frac{d+\alpha }{2},\frac{|x-\tilde x|^2}{2}\right)(\tilde x-x)d\mu_\mathrm{data}(\tilde x)\right|}{\int_{\mathbb R^d}\Phi_{\sigma_\varepsilon}^{\sigma_T}\left(\frac{d+\alpha}{2},\frac{|x-\tilde x|^2}{2}\right)|x|d\mu_\mathrm{data}(\tilde x)}.
\]
Consider the simple case where $\mu_\mathrm{data}$ is a Dirac measure at some point $\tilde x\in\mathbb R^d$. By using
\[
\lim_{z\to\infty} ze^{\frac{z}{b^2}}\Phi_{a}^{b}(s,z) =\frac{1}{b^{2(s-1)}},
\]
we can show that the uniform limit of $\mathcal I_1$ as $|x|\to\infty$ is equal to $\sigma_T^{\alpha-2}$ (Lemma \ref{Lemmaasymptotic}). And $\mathcal I_2=\frac{|\tilde x-x|}{|x|}$ clearly tends to $1$. Detailed proof for the general case is provided in Appendix \ref{Prooflimsup}.
\end{proof}

Now we are ready to state and prove the global well-posedness theorem.

\begin{theorem}\label{Theoremexistence}
Let $x_0\in\mathbb R^d$ and assume that Assumption \ref{Assumptione1} holds. Then the following initial value problem admits a unique solution $x(t)$ for $t\in(-\infty,\infty)$:
\[
\begin{cases}
\displaystyle \frac{dx}{dt}=s_*(x)+e(x),\\
\displaystyle x(0)=x_0.
\end{cases}
\]
\end{theorem}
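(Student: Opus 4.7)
The plan is to combine the standard Cauchy--Lipschitz machinery with a global \emph{a priori} bound obtained from linear growth of the right-hand side, which in turn follows from Proposition~\ref{Theoremlimsup} applied to $s_*$ and from the linear growth assumption on $e$. Since $s_*$ is $C^\infty$ by Lemma~\ref{Lemmacinfty} and $e$ is locally Lipschitz by Assumption~\ref{Assumptione1}, the vector field $F(x) := s_*(x)+e(x)$ is locally Lipschitz on $\mathbb{R}^d$. Hence for any $x_0\in\mathbb{R}^d$ the Picard--Lindel\"of theorem furnishes a unique maximal solution $x:(T_-,T_+)\to\mathbb{R}^d$ with $x(0)=x_0$ and $-\infty\leq T_-<0<T_+\leq\infty$. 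Uniqueness globally on $(T_-,T_+)$ also follows.

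The main task is therefore to upgrade $T_\pm$ to $\pm\infty$. For this I would establish that $F$ has \emph{global} linear growth, i.e.\ that there exists $K<\infty$ with $|F(x)|\leq K(1+|x|)$ for all $x\in\mathbb{R}^d$. The bound on $e$ is part of Assumption~\ref{Assumptione1}. For $s_*$, Proposition~\ref{Theoremlimsup} implies that for every $\delta>0$ there is $R=R(\delta)$ such that $|s_*(x)|\leq(\sigma_T^{\alpha-2}+\delta)|x|$ whenever $|x|\geq R$; on the compact ball $\{|x|\leq R\}$, continuity of $s_*$ (it is $C^\infty$) provides a finite bound. Combining these gives the desired global estimate $|s_*(x)|\leq K'(1+|x|)$, hence $|F(x)|\leq K(1+|x|)$.

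With this in hand, apply Gronwall's inequality to $|x(t)|$: from $\frac{d}{dt}|x(t)|^2 = 2\langle x(t), F(x(t))\rangle \leq 2|x(t)|\cdot K(1+|x(t)|)$, one obtains $|x(t)|\leq(|x_0|+K|t|)e^{K|t|}$ on any interval $[0,T_+)$ or $(T_-,0]$ on which the solution is defined. This \emph{a priori} bound rules out finite-time blow-up, and the standard continuation argument (if $T_+<\infty$, the bounded trajectory $x([0,T_+))$ has a limit point at which the solution can be extended, contradicting maximality; analogously for $T_-$) forces $T_+=\infty$ and $T_-=-\infty$. The only point requiring care is that the Cauchy--Lipschitz theory is applied on both half-lines, so backward existence for $t<0$ is obtained by running the ODE with vector field $-F$, to which the same linear-growth argument applies verbatim. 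I expect no substantive obstacle beyond the qualitative use of Proposition~\ref{Theoremlimsup}; all remaining steps are textbook ODE theory.
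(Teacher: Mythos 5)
Your proposal is correct and follows essentially the same route as the paper: local existence and uniqueness from local Lipschitz continuity of $s_*+e$ (via $C^\infty$ smoothness of $s_*$ and the assumption on $e$), global linear growth of the vector field obtained by combining Proposition~\ref{Theoremlimsup} with continuity on a compact ball, and then Gr\"onwall to preclude finite-time blow-up. The only cosmetic difference is that you derive the \emph{a priori} bound from the differential inequality for $|x(t)|^2$, while the paper applies Gr\"onwall directly to the integral form of the equation; both yield the same estimate.
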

\begin{proof}
Local existence and uniqueness is clear from the local Lipschitz continuity of $s_*+e$.  By Proposition \ref{Theoremlimsup}, $|s_*+e|$ grows at most linearly in $|x|$. We combine this and the Gr\"onwall inequality to obtain the global existence of the solution. Detailed proof is provided in Appendix \ref{Proofexistence}.
\end{proof}

To explain the behavior of the solution to \eqref{mainsystem}, we introduce the concepts of attracting sets and fundamental neighborhoods.
\begin{definition}\label{Defattract} \cite{Ruelle}
For a dynamical system $\{\phi_t\}_{t\in\mathbb R}$ on $\mathbb{R}^d$, a closed set $\Lambda \subset \mathbb{R}^d$ is called an \textit{attracting set} if there exists a neighborhood $U$ of $\Lambda$ such that for any neighborhood $V$ of $\Lambda$, we have $\phi_t(U) \subset V$ for sufficiently large $t$, and satisfies
\[
\Lambda=\phi_T(\Lambda)=\bigcap_{t\geq T}\phi_t(U),\quad\forall~T\in\mathbb R.
\]
 In this case, $U$ is called a \textit{fundamental neighborhood} of $\Lambda$. The open set
\[
W:=\bigcup_{t\in\mathbb R}(\phi_t)^{-1}(U)
\]
is called the {\it basin of attraction} of $\Lambda$.
\end{definition}
A more generalized definition and several equivalent conditions can be seen in \cite{Ruelle}. The basin of attraction is the set of all points that are attracted to the attracting set as $t\to\infty$, so it is independent of the choice of a fundamental neighborhood. 
In terms of attracting sets and basins of attractions, the following theorem explains that the flows of \eqref{mainsystem} move toward regions with high values of $L_{\sigma_\varepsilon}^{\sigma_T}\left(\frac{d+\alpha}{2}-1,\cdot\right)$, or equivalently $\tilde p_\mathrm{data}$, the density of a smoothed version of $\mu_\mathrm{data}$. 
\begin{theorem}\label{Theoremattract}
Let $0<a<b<\infty$ be given and assume that Assumption \ref{Assumptione1} holds. Let $\Gamma$ be a path-connected component of the non-empty superlevel set $\{x\in\mathbb R^d:L_{\sigma_\varepsilon}^{\sigma_T}\left(\frac{d+\alpha}{2}-1,x\right)>a\}$. Define
\[
 A:=\left\{x\in \Gamma:L_{\sigma_\varepsilon}^{\sigma_T}\left(\frac{d+\alpha}{2}-1,x\right)\geq b\right\}.
\]
Suppose that $\Gamma\setminus A$
 is disjoint from the set 
\[
E:=\{x\in\mathbb R^d:|s_*(x)|^2\leq -\langle s_*(x),e(x)\rangle\}.
\]
 Then there exists a compact attracting set $\Lambda\subset A$ of \eqref{mainsystem}, whose basin of attraction contains $\Gamma$.
\end{theorem}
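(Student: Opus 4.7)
The plan is to use $L(x) := L_{\sigma_\varepsilon}^{\sigma_T}\!\bigl(\tfrac{d+\alpha}{2}-1, x\bigr)$ as a Lyapunov function for \eqref{mainsystem}. The central observation is a direct gradient identity: since $\partial_z\Phi_{\sigma_\varepsilon}^{\sigma_T}(s,z) = -\Phi_{\sigma_\varepsilon}^{\sigma_T}(s+1,z)$, differentiating under the integral (justified by compact support of $\mu_\mathrm{data}$ and Lemma \ref{Lemmacinfty}) yields
\[
\nabla L(x) = \int_{\mathbb R^d}\Phi_{\sigma_\varepsilon}^{\sigma_T}\!\left(\tfrac{d+\alpha}{2},\tfrac{|x-\tilde x|^2}{2}\right)(\tilde x - x)\,d\mu_\mathrm{data}(\tilde x) = M(x)\,s_*(x),
\]
where $M(x) := \int \Phi_{\sigma_\varepsilon}^{\sigma_T}(\tfrac{d+2\alpha-2}{2},\tfrac{|x-\tilde x|^2}{2})\,d\mu_\mathrm{data}(\tilde x)$ is the denominator in \eqref{sstar} and is strictly positive for every $x$. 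Hence along any solution $x(t)$ of \eqref{mainsystem},
\[
\tfrac{d}{dt}L(x(t)) = M(x(t))\bigl(|s_*(x(t))|^2 + \langle s_*(x(t)),e(x(t))\rangle\bigr),
\]
which is strictly positive whenever $x(t)\in\Gamma\setminus A$ by the hypothesis $(\Gamma\setminus A)\cap E = \emptyset$.

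Next I would set up the topological scaffolding. By Lemma \ref{Lemmaasymptotic} and compactness of $\mathrm{supp}\,\mu_\mathrm{data}$, $L(x)\to 0$ as $|x|\to\infty$, so $\Gamma$ is bounded and $\overline\Gamma$ is compact. For each $\delta\in(0,b-a)$ set $A_\delta := \{x\in\Gamma : L(x) > b-\delta\}$. A standard contradiction — any candidate exit point of $A_\delta$ would lie in $\{L=b-\delta\}\cap\Gamma\subset\Gamma\setminus A$, where $\dot L>0$ — shows $A_\delta$ is forward-invariant under the flow $\phi_t$ of \eqref{mainsystem}; the same reasoning (with $\partial\Gamma\subset\{L=a\}$ and trajectories near $\partial\Gamma$ forced to pass through $\Gamma\setminus A$, where $L$ is strictly increasing) shows $\Gamma$ itself is forward-invariant. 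On the compact annulus $K := \{b-\delta_0\leq L\leq b-\delta_1\}\cap\overline\Gamma\subset\Gamma\setminus A$ (with $0<\delta_1<\delta_0<b-a$), continuity and strict positivity of $M(|s_*|^2+\langle s_*,e\rangle)$ furnish a uniform lower bound $c=c(\delta_0,\delta_1)>0$, so every trajectory starting in $\overline{A_{\delta_0}}$ reaches $\overline{A_{\delta_1}}$ in time at most $(\delta_0-\delta_1)/c$. A parallel bound on $\{l_0\leq L\leq b-\delta_0\}\cap\overline\Gamma$ (any $l_0>a$) shows each $x_0\in\Gamma$ enters $A_{\delta_0}$ in finite time.

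Finally, fix any $\delta_0\in(0,b-a)$, take $U:=A_{\delta_0}$ as fundamental neighborhood, and define
\[
\Lambda := \bigcap_{t\geq 0}\phi_t(\overline{A_{\delta_0}}).
\]
Forward invariance makes $\{\phi_t(\overline{A_{\delta_0}})\}_{t\geq 0}$ a nested family of non-empty compact sets, so $\Lambda$ is non-empty and compact; iterating the uniform entry-time bound as $\delta_1\to 0$ gives $\Lambda\subset\bigcap_{\delta_1>0}\overline{A_{\delta_1}} = A$. Full invariance $\phi_s(\Lambda)=\Lambda$ for all $s\in\mathbb R$ follows from the decreasing-in-$t$ structure together with the fact that $\phi_s$ commutes with intersections (for $s<0$ one uses forward invariance of $\Lambda\subset A\subset A_{\delta_0}$ to check $\phi_{s'}(\Lambda)\subset\overline{A_{\delta_0}}$ for $s'\in(0,-s]$). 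The attraction property — for any open $V\supset\Lambda$, $\phi_t(A_{\delta_0})\subset V$ for large $t$ — is the standard compactness argument: a sequence $z_{t_n}\in\phi_{t_n}(\overline{A_{\delta_0}})\setminus V$ with $t_n\to\infty$ would accumulate at a point of $\Lambda\setminus V$, contradicting openness of $V$. The basin contains $\Gamma$ because each $x_0\in\Gamma$ reaches $U$ in finite time. The main technical nuisance is the forward-invariance/uniform-entry-time bookkeeping, but conceptually the whole argument stands or falls on the Lyapunov identity $\nabla L = M\,s_*$.
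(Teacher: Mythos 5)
Your argument is correct and follows essentially the same route as the paper: the identity $\nabla L = \tilde L\, s_*$ (your $M = \tilde L$), forward invariance of the superlevel sets $U_c = A_{b-c}$, a uniform lower bound on $\dot L$ over the compact annuli, and the resulting attracting set $\Lambda = \bigcap_{t\geq0}\phi_t(\cdot)$. The only cosmetic difference is that the paper invokes a Ruelle-type criterion (Proposition \ref{Propattract}) to conclude, taking the intersection over the open sets $U_c$, whereas you verify compactness, invariance, and the attraction property by hand from the nested family $\phi_t(\overline{A_{\delta_0}})$ — the two constructions coincide.
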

\begin{proof}
We only provide motivation here. Denote by $\phi_t$ the flow generated by the system \eqref{mainsystem}. Then we have
\[
\frac{d}{dt}L_{\sigma_\varepsilon}^{\sigma_T}\left(\frac{d+\alpha}{2}-1,\phi_t(x)\right)>0
\]
as long as $\phi_t(x)\notin E$. Hence, for $x\in \Gamma$, $L(\phi_t(x))$ increases in $t\geq0$ until $\phi_t(x)$ enters $A$. This motivates us to introduce the following candidates for fundamental neighborhoods:
\[
U_c:=\{x\in\Gamma: L(x)>c\},\quad c\in(a,b).
\]
For any $(c_1,c_2)\subset(a,b)$, $\phi_t(U_{c_1})\subset U_{c_2}$ for sufficiently large $t\geq0$. Detailed proof is provided in Appendix \ref{Proofattract}.
\end{proof}

\begin{remark}\label{Remark}
We may write a straightforward corollary of Theorem \ref{Theoremattract} by replacing $E$ by any set $\tilde E\subset\mathbb R^d$ containing $E$. One such example is 
\begin{align*}
\begin{aligned}
\tilde E:&=\left\{x\in\mathbb R^d: |s_*(x)|\leq |e(x)|\right\}\\
&=\bigg\{x\in\mathbb R^d: \left|\nabla_x\log L_{\sigma_\varepsilon}^{\sigma_T}\left(\frac{d+\alpha}{2}-1,x\right)\right|\leq\frac{L_{\sigma_\varepsilon}^{\sigma_T}\left(\frac{d+2\alpha}{2}-1,x\right)}{L_{\sigma_\varepsilon}^{\sigma_T}\left(\frac{d+\alpha}{2}-1,x\right)}|e(x)| \bigg\}.
\end{aligned}
\end{align*}
Note that since
\[
\lim_{R\to\infty}\sup_{|x|\geq R}\left|\frac{L_{\sigma_\varepsilon}^{\sigma_T}\left(\frac{d+2\alpha}{2}-1,x\right)}{L_{\sigma_\varepsilon}^{\sigma_T}\left(\frac{d+\alpha}{2}-1,x\right)}-\sigma_T^{-\alpha}\right|=0,
\] the fraction is uniformly bounded in $x\in\mathbb R^d$ (Lemma \ref{Lemmafraction}).
Hence, we can roughly say that if $|e(\cdot)|$ is small, then the set $\tilde E$ will consist of $x\in\mathbb R^d$ having small values of 
\[
\left|\nabla_x\log L_{\sigma_\varepsilon}^{\sigma_T}\left(\frac{d+\alpha}{2}-1,x\right)\right|=|\nabla_x\log\tilde p_\mathrm{data}(x)|.
\] 
Consider a path-connected component $\Gamma$ of a strict superlevel set of $L_{\sigma_\varepsilon}^{\sigma_T}\left(\frac{d+\alpha}{2}-1,\cdot\right)$, or equivalently, that of $\log\tilde p_\mathrm{data}(\cdot)$. If $\{x\in\Gamma:\log\tilde p_\mathrm{data}(x)<\tilde b\}$ is disjoint from $\tilde E$, then the flows starting from $\Gamma$ would move into $\{x\in\Gamma:\log\tilde p_\mathrm{data}(x)\geq\tilde b\}$. If $|s_\theta-s_*|=|e|$ is small, then $\tilde E$ will consist only of $x\in\mathbb R^d$ with small $|\nabla_x\log\tilde p_\mathrm{data}(x)|$, so that we would be able to choose relatively large $\tilde b$, and the flows starting from $\Gamma$ would pass through the region with large $|\nabla_x\log\tilde p_\mathrm{data}(x)|$, and move into a region with high values of $\log\tilde p_\mathrm{data}$. In summary, we can infer from Theorem \ref{Theoremattract} that if $|e(\cdot)|$ is small, then the flows of \eqref{mainsystem} will move into regions with high density $\tilde p_\mathrm{data}$.
\end{remark}

Next, we  investigate the special case where $e(x)\equiv0$, i.e., $s_\theta(x)\equiv s_*(x)$. In other words, we consider the following assumption, which is a special case of Assumption \ref{Assumptione1}.
\begin{assumption}\label{Assumptione2} $0 < \sigma_\varepsilon < \sigma_T < \infty$, $\alpha \in \mathbb{R}$, $\mu_\mathrm{data}$ is compactly suppported, and $e: \mathbb{R}^d \to \mathbb{R}^d$ is identically equal to zero.
\end{assumption}
On one hand, we have
\[
E=\{x\in\mathbb R^d:\nabla_x\log\tilde p_\mathrm{data}(x)=0\}
\]
in Theorem \ref{Theoremattract}, so  that the flows starting from $\Gamma$ will pass through the regions with nonzero $\nabla_x\log\tilde p_\mathrm{data}$, and move into regions that have critical points of $\log\tilde p_\mathrm{data}$ on their boundaries. On the other hand, we have the following theorem which simply states that all flows of \eqref{mainsystem} converge to the set of critical points of  $\log\tilde p_\mathrm{data}$.


\begin{theorem}\label{Theoremlasalle}
Assume that Assumption \ref{Assumptione2} holds and let $\phi_t$ be the flow governed by the autonomous system \eqref{mainsystem}. Define
\[
E=\left\{x\in\mathbb R^d: \nabla_x L_{\sigma_\varepsilon}^{\sigma_T}\left(\frac{d+\alpha}{2}-1,x\right)=0\right\}.
\]
Then $\phi_t(x)\to E$ as $t\to\infty$ for all $x\in \mathbb R^d$.
\end{theorem}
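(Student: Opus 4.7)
The plan is to apply a LaSalle-style invariance argument using $L(x) := L_{\sigma_\varepsilon}^{\sigma_T}\!\left(\tfrac{d+\alpha}{2}-1, x\right)$ as a Lyapunov function. Differentiating under the integral (justified by the compact support of $\mu_\mathrm{data}$ and smoothness of $\Phi$ on the bounded $u$-interval $[1/\sigma_T^2,\, 1/\sigma_\varepsilon^2]$), and using $\partial_z \Phi_a^b(s,z) = -\Phi_a^b(s+1,z)$, I obtain
\[
\nabla L(x) = \int_{\mathbb R^d} \Phi_{\sigma_\varepsilon}^{\sigma_T}\!\left(\tfrac{d+\alpha}{2}, \tfrac{|x-\tilde x|^2}{2}\right)(\tilde x - x)\, d\mu_\mathrm{data}(\tilde x),
\]
which is exactly the numerator of $s_*$ in \eqref{sstar}. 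Hence $s_*(x) = \nabla L(x)/D(x)$, where $D(x) := \int_{\mathbb R^d} \Phi_{\sigma_\varepsilon}^{\sigma_T}\!\left(\tfrac{d+2\alpha-2}{2}, \tfrac{|x-\tilde x|^2}{2}\right) d\mu_\mathrm{data}(\tilde x) > 0$. Under Assumption \ref{Assumptione2} the system reads $\dot x = s_*(x)$, and so along any trajectory
\[
\frac{d}{dt} L(\phi_t(x_0)) = \langle \nabla L(\phi_t(x_0)),\, s_*(\phi_t(x_0))\rangle = \frac{|\nabla L(\phi_t(x_0))|^2}{D(\phi_t(x_0))} \geq 0,
\]
with equality exactly on $E = \{\nabla L = 0\}$.

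Second, I would show every forward trajectory is bounded. Since $\Phi_{\sigma_\varepsilon}^{\sigma_T}(s,z) \to 0$ as $z \to \infty$ (dominated convergence on the bounded $u$-interval) and $\mu_\mathrm{data}$ has compact support, one gets $L(x) \to 0$ as $|x| \to \infty$. On the other hand $L > 0$ everywhere since the integrand defining $\Phi$ is strictly positive, so $L(x_0) > 0$ and the monotonicity above gives $L(\phi_t(x_0)) \geq L(x_0) > 0$ for all $t \geq 0$. If $|\phi_{t_n}(x_0)| \to \infty$ along some sequence $t_n \to \infty$, then $L(\phi_{t_n}(x_0)) \to 0$, contradicting the lower bound. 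Hence the forward orbit has compact closure.

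The final step is the standard LaSalle conclusion. Since $L \circ \phi_t$ is bounded and non-decreasing it converges to some $\ell^\ast \in \mathbb R$, and by continuity $L \equiv \ell^\ast$ on the $\omega$-limit set $\omega(x_0)$, which is nonempty, compact, and forward-invariant by precompactness of the orbit and the continuity of the flow guaranteed by Theorem \ref{Theoremexistence}. For any $y \in \omega(x_0)$ the orbit $\{\phi_s(y) : s \geq 0\} \subset \omega(x_0)$, so $L(\phi_s(y)) \equiv \ell^\ast$, and differentiating at $s = 0$ gives $|\nabla L(y)|^2/D(y) = 0$, i.e., $y \in E$. Thus $\omega(x_0) \subset E$, and precompactness of the forward orbit yields $\mathrm{dist}(\phi_t(x_0), \omega(x_0)) \to 0$, hence $\mathrm{dist}(\phi_t(x_0), E) \to 0$ as $t \to \infty$.

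The conceptually critical step is the boundedness of the trajectory via the decay $L(x) \to 0$ at infinity; without this, the $\omega$-limit set could be empty and the LaSalle argument would collapse. Everything else—interchanging gradient and integral, the Lyapunov identity $\dot L = |\nabla L|^2/D$, and the invariance argument on $\omega(x_0)$—is routine once one has identified the correct Lyapunov function and recognized that $s_*$ is a strictly positive scalar multiple of $\nabla L$.
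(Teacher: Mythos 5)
Your proof is correct and follows essentially the same LaSalle/Lyapunov strategy as the paper: identify $s_* = \nabla L / D$ so that $\dot L = |\nabla L|^2/D \ge 0$, use the decay of $L$ at infinity together with monotonicity of $L$ along trajectories to get precompact forward orbits, and conclude convergence to $\{\nabla L = 0\}$. The only packaging difference is that you run the $\omega$-limit-set invariance argument from scratch (which also lets you avoid first proving that $E$ is compact via Proposition \ref{Theoremlimsup}), whereas the paper constructs the bounded positively-invariant sets $G_\delta$ and invokes LaSalle's principle as a cited proposition.
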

\begin{proof}
The following observation motivates us to use LaSalle's invariance principle \cite{LaSalle}, introduced in Proposition \ref{Proplasalle}:
\[
\frac{d}{dt}L_{\sigma_\varepsilon}^{\sigma_T}\left(\frac{d+\alpha}{2}-1,\phi_t(x)\right)=\frac{|\nabla L_{\sigma_\varepsilon}^{\sigma_T}\left(\frac{d+\alpha}{2}-1,\phi_t(x)\right)|^2}{L_{\sigma_\varepsilon}^{\sigma_T}\left(\frac{d+2\alpha}{2}-1,\phi_t(x)\right)}\geq 0,\quad x\in\mathbb R^d.
\]
Detailed proof is provided in Appendix \ref{Prooflasalle}.
\end{proof}
By Theorem \ref{Theoremlasalle}, we can expect that all flows of \eqref{mainsystem} will converge to modes of $\tilde p_\mathrm{data}$, the density of a smoothed version of $\mu_\mathrm{data}$, as $t\to\infty$. Since the goal of the model $s_\theta$ is to generate samples from the desirable distribution $\mu_\mathrm{data}$, or more precisely, from regions with high density with respect to $\mu_\mathrm{data}$, we can infer from Theorem \ref{Theoremlasalle} that this goal is achieved.

Lastly, we theoretically argue that the model will be overfit to the training data if the training loss is too small. Let $\mu_\mathrm{train}$ be the discrete uniform distribution on a finite set
\[
\mathcal X=\{x_1,\dots,x_N\}\subset \mathbb R^d.
\]
Consider the extreme case where the training loss is zero. Then the resulting model $s_*^\mathrm{train}$ would be equal to the modification of $s_*$ by replacing $\mu_\mathrm{data}$ by $\mu_\mathrm{train}$ in \eqref{sstar}, i.e.
\[
s_*^\mathrm{train}(x):=\frac{\frac{1}{N}\sum_{j=1}^N\Phi_{\sigma_\varepsilon}^{\sigma_T}\Big(\frac{d+\alpha}{2},\frac{|x-x_j|^2}{2}\Big)(x_j-x) }{\frac{1}{N}\sum_{j=1}^N\Phi_{\sigma_\varepsilon}^{\sigma_T}\Big(\frac{d+2\alpha-2}{2},\frac{|x-x_j|^2}{2}\Big) }.
\] 
To argue that overfitting occurs, we have to show that the flows of the following system are likely to converge to points that are too close to  $x_i's$, as $t\to\infty$:
\begin{equation}\label{overfitsystem}
\frac{dx}{dt}=s_*^\mathrm{train}(x),\quad t\in\mathbb R.
\end{equation}
The following theorem states that this is the case when $\sigma_\varepsilon$ is sufficiently small.

\begin{theorem}\label{Theoremoverfit}
Let a constant $0<k<\sqrt{\frac{2(d+\alpha+2)}{3(d+\alpha)}}$ be given and assume that $0 < \sigma_\varepsilon < \sigma_T < \infty$, $\alpha>-d$. Then for each $1\leq i\leq N$, there exists a constant $\delta>0$ depending on $d, \alpha, \sigma_T, k, \mathcal X$ such that if $0<\sigma_\varepsilon<\delta$, then the system \eqref{overfitsystem} has an asymptotically stable equilibrium point in the open ball $\{x\in\mathbb R^d:|x-x_i|<k\sigma_\varepsilon\}$, which is a unique equilibrium point in that region.
\end{theorem}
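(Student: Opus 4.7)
The plan is to reformulate the equilibrium condition $s_*^{\mathrm{train}}(x)=0$ as a fixed-point equation, exploit a dominant scale separation near $x_i$ to apply a contraction-mapping argument, and then verify asymptotic stability by an explicit Jacobian computation. Since the denominator of $s_*^{\mathrm{train}}$ is pointwise strictly positive, its zeros coincide with the fixed points of the weighted-mean map
\[
F(x):=\frac{\sum_{j=1}^{N}w_j(x)\,x_j}{\sum_{j=1}^{N}w_j(x)},\qquad w_j(x):=\Phi_{\sigma_\varepsilon}^{\sigma_T}\!\Big(\tfrac{d+\alpha}{2},\tfrac{|x-x_j|^2}{2}\Big),
\]
so the task is to find a unique attracting fixed point of $F$ inside $\overline{B}:=\{x:|x-x_i|\le k\sigma_\varepsilon\}$.

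The scale separation is the key observation. For $x\in\overline{B}$ and $\sigma_\varepsilon$ smaller than a threshold depending only on $k$ and $d_{\min}:=\min_{a\ne b}|x_a-x_b|$, direct estimation of the integral defining $\Phi$ gives $w_i(x)=\Theta(\sigma_\varepsilon^{-(d+\alpha)})$ uniformly, while every $w_j(x)$ with $j\ne i$ is uniformly bounded in $\sigma_\varepsilon$ (since $|x-x_j|\ge d_{\min}/2$). Writing $p_j:=w_j/\sum_l w_l$, this forces $1-p_i(x)=O(\sigma_\varepsilon^{d+\alpha})$ on $\overline{B}$. Using $F(x)-x_i=\sum_{j\ne i}p_j(x)(x_j-x_i)$ one obtains $|F(x)-x_i|\le\mathrm{diam}(\mathcal X)\cdot C\sigma_\varepsilon^{d+\alpha}<k\sigma_\varepsilon$ for $\sigma_\varepsilon<\delta$ (this determines $\delta$), and a parallel estimate on $\nabla F$ yields a Lipschitz constant strictly $<1$ on $\overline{B}$, so Banach's theorem supplies a unique fixed point $x_{\mathrm{eq}}\in B$.

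For asymptotic stability I would compute $\nabla s_*^{\mathrm{train}}(x_{\mathrm{eq}})=\nabla N_1(x_{\mathrm{eq}})/N_2(x_{\mathrm{eq}})$ (where $N_1,N_2$ are the numerator and denominator of $s_*^{\mathrm{train}}$) using $N_1(x_{\mathrm{eq}})=0$ together with $\partial_z\Phi(s,z)=-\Phi(s+1,z)$. The leading order is
\[
\nabla N_1(x_{\mathrm{eq}})\,\sim\,-w_i(x_{\mathrm{eq}})\,I+\Phi_{\sigma_\varepsilon}^{\sigma_T}\!\Big(\tfrac{d+\alpha+2}{2},\tfrac{|x_{\mathrm{eq}}-x_i|^2}{2}\Big)(x_i-x_{\mathrm{eq}})(x_i-x_{\mathrm{eq}})^T,
\]
whose $d-1$ eigenvalues orthogonal to $x_{\mathrm{eq}}-x_i$ equal $-w_i(x_{\mathrm{eq}})<0$, and whose radial eigenvalue, after the rescaling $y=(x-x_i)/\sigma_\varepsilon$ and the substitution $v=\sigma_\varepsilon^2 u$ inside $\Phi$, has the sign of $|y|^2 H_{s+1}(|y|)-H_s(|y|)$, with $s=(d+\alpha)/2$ and $H_s(y):=\int_0^1 v^{s-1}e^{-y^2 v/2}\,dv$. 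The alternating Taylor series for $H_s$ gives $H_s(y)\ge 1/s-y^2/(2(s+1))$ and $H_{s+1}(y)\le 1/(s+1)$, so
\[
\frac{|y|^2 H_{s+1}(|y|)}{H_s(|y|)}\le\frac{s|y|^2/(s+1)}{1-s|y|^2/(2(s+1))},
\]
and the right-hand side is strictly less than $1$ precisely when $s|y|^2/(s+1)<2/3$, equivalently $|y|^2<\tfrac{2(d+\alpha+2)}{3(d+\alpha)}$. The hypothesis on $k$ enforces this uniformly on $\overline{B}$, the Jacobian is negative definite, and asymptotic stability follows from Lyapunov's theorem.

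The hard part is making the $\sigma_\varepsilon\to 0^+$ asymptotics of $\Phi_{\sigma_\varepsilon}^{\sigma_T}$ and of its $z$-derivatives uniform in $y\in\{|y|\le k\}$, while absorbing the $O(1)$ corrections from the terms with $j\ne i$ in $\nabla N_1$ so that they cannot spoil the sign of the radial eigenvalue. This reduces to elementary tail bounds of the form $u^{s-1}e^{-zu}$ on $[1/\sigma_T^2,1/\sigma_\varepsilon^2]$ with $z$ bounded below, but the bounds must be tracked carefully enough that the resulting threshold $\delta$ depends only on $d,\alpha,\sigma_T,k,\mathcal X$ as claimed.
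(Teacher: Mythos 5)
Your proposal is mathematically sound and targets the same structural obstacle as the paper, but it travels a genuinely different route for the existence and uniqueness step, so the comparison is worth spelling out.

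The paper proves existence and uniqueness by a compactness argument: Step 1 shows that $\operatorname{Hess}_L$ is negative definite \emph{everywhere} on the ball $\{|x-x_i|<k\sigma_\varepsilon\}$ (so $L$ is strictly concave there), Step 2 shows $L(x_i)>L$ on the boundary sphere, and Steps 3--4 conclude that $L$ has a unique interior maximizer $\hat x_i$, taken as the equilibrium, with $V(x):=L(\hat x_i)-L(x)$ serving as the Lyapunov function. You instead recast $s_*^{\mathrm{train}}(x)=0$ as a fixed point of the weighted-mean map $F$, exploit the scale separation $w_i=\Theta(\sigma_\varepsilon^{-(d+\alpha)})$ versus $w_j=O(1)$ for $j\ne i$, and invoke Banach's theorem; stability then follows from negative definiteness of the Jacobian at the fixed point rather than from a Lyapunov function. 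The key quantitative computation is nevertheless identical: your radial eigenvalue inequality $|y|^2 H_{s+1}(|y|)<H_s(|y|)$, established via the convexity bound $H_s(|y|)\ge 1/s-|y|^2/(2(s+1))$ together with $H_{s+1}(|y|)\le 1/(s+1)$, is exactly the paper's Step 1 estimate (their \eqref{concave1}--\eqref{concave2} applied to $\mathcal J_1$), and both lead to the threshold $|y|^2<\tfrac{2(d+\alpha+2)}{3(d+\alpha)}$. In fact $\nabla N_1=\operatorname{Hess}_L$, so your Jacobian is the paper's Hessian divided by the positive denominator; the two stability arguments are two ways of reading the same matrix. Your approach is more economical in that you only need negative definiteness at the single point $x_{\mathrm{eq}}$ rather than on the full ball, and it localizes the equilibrium more precisely (within $O(\sigma_\varepsilon^{d+\alpha})$ of $x_i$ rather than merely somewhere in the ball).

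One point you should make explicit: the self-mapping estimate $|F(x)-x_i|\le C\sigma_\varepsilon^{d+\alpha}<k\sigma_\varepsilon$ and the contraction estimate $\|\nabla F\|=O(\sigma_\varepsilon^{d+\alpha-1})$ both require $d+\alpha>1$ to yield a self-map with small Lipschitz constant; under the stated hypothesis $\alpha>-d$ one only has $d+\alpha>0$. This is not a defect relative to the paper --- their Step~2 limit $\lim_{\sigma_\varepsilon\searrow 0}\sigma_\varepsilon^{d+\alpha-2}\mathcal J_2$ silently discards the $O(\sigma_\varepsilon)$ terms coming from $j\ne i$, which is likewise only justified when $d+\alpha>1$ --- but your contraction argument exposes the restriction more visibly, and you should state it rather than leave the threshold $\delta$ implicit. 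Similarly, when claiming the Jacobian's sign is controlled by the $j=i$ block, you should note that the $j\ne i$ corrections are $O(1)$ against a dominant $\Theta(\sigma_\varepsilon^{-(d+\alpha)})$ block, which is exactly the ``hard part'' you flag; once stated this way it is straightforward, and it is the same bookkeeping the paper does via Lemmas \ref{L3}--\ref{L5}.
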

\begin{proof}
The idea is to show that $L_{\sigma_\varepsilon}^{\sigma_T}\left(\frac{d+\alpha}{2}-1,\cdot\right)$ is strictly concave in the open ball, and that 
\[
L_{\sigma_\varepsilon}^{\sigma_T}\left(\frac{d+\alpha}{2}-1,x_i\right)> L_{\sigma_\varepsilon}^{\sigma_T}\left(\frac{d+\alpha}{2}-1,x\right)
\]
 on the boundary $|x-x_i|=k\sigma_\varepsilon$. Then $L_{\sigma_\varepsilon}^{\sigma_T}\left(\frac{d+\alpha}{2}-1,\cdot\right)$ has a unique local maximizer $\hat x_i$ in the open ball  $|x-x_i|<k\sigma_\varepsilon$, which is also a unique stationary point in that region. This motivates the Liapunov function $V(x):=L_{\sigma_\varepsilon}^{\sigma_T}(\hat x_i)-L_{\sigma_\varepsilon}^{\sigma_T}(x)$, finishing the proof. 
Detailed proof is provided in Appendix \ref{Proofoverfit}.
\end{proof}

\section{Conclusion}
In this paper, we have developed a theoretical framework to explain the effectiveness of neural networks trained with multiplicative noise conditioning for sampling processes. By deriving the optimal model that minimizes the training objective proposed in \cite{SE20}, we found that although this objective does not explicitly yield the score function, the trained model can still produce high-quality samples, as empirically demonstrated in \cite{SE20}. We showed that both annealed Langevin dynamics and the probability flow ODE correspond to noise-removed, time-scaled versions of an autonomous system, with its dynamics governed by the trained neural network. Our analysis revealed that the model's performance depends on its proximity to the derived optimal model, with sufficiently accurate networks producing high-quality samples, while excessive training can lead to overfitting.

\appendix

\section{Proofs}
\subsection{Auxiliary lemmas concerning $\Phi_{a}^{b}$}
In this subsection we state and prove some basic properties of $\Phi_{a}^{b}$ that is used throughout the paper. Some of these properties were motivated by those of incomplete gamma functions, which may be found in \cite{Paris} and the references therein. Recall that 
for $0<a<b<\infty$, $s\in\mathbb R$, $z\in\mathbb R$, we define
\[
\Phi_{a}^{b}(s,z):=\int_{\frac{1}{b^2}}^\frac{1}{a^2}u^{s-1}e^{-zu}du. 
\]


\begin{lemma}\label{Lemmaasymptotic}
For any $0<a<b<\infty$, $s\in\mathbb R$, we have
\[
\lim_{z\to\infty} ze^{\frac{z}{b^2}}\Phi_{a}^{b}(s,z) =\frac{1}{b^{2(s-1)}}.
\]
\end{lemma}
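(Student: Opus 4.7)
The plan is to rescale the integration variable so that the mass of the integrand concentrates at a fixed point as $z\to\infty$, then apply dominated convergence.

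First I would perform the substitution $u = \frac{v}{z} + \frac{1}{b^2}$, so that $du = dv/z$, the lower limit $u=1/b^2$ corresponds to $v=0$, and the upper limit $u = 1/a^2$ corresponds to $v = z(1/a^2 - 1/b^2)$. This yields
\[
\Phi_a^b(s,z) = \frac{e^{-z/b^2}}{z}\int_0^{z(1/a^2 - 1/b^2)}\left(\frac{v}{z}+\frac{1}{b^2}\right)^{s-1}e^{-v}\,dv,
\]
so that
\[
z\,e^{z/b^2}\,\Phi_a^b(s,z) = \int_0^{z(1/a^2 - 1/b^2)}\left(\frac{v}{z}+\frac{1}{b^2}\right)^{s-1}e^{-v}\,dv.
\]
As $z\to\infty$, the upper limit of integration tends to $+\infty$ and the integrand converges pointwise to $(1/b^2)^{s-1}e^{-v} = b^{2(1-s)}e^{-v}$.

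Next I would justify interchanging limit and integral by dominated convergence, viewing the right-hand side as an integral over $(0,\infty)$ after extending by zero. The dominating bound depends on the sign of $s-1$. On the relevant range one has $\frac{1}{b^2}\le \frac{v}{z}+\frac{1}{b^2}\le \frac{1}{a^2}$ (the upper bound follows from $v \le z(1/a^2 - 1/b^2)$), so
\[
\left(\frac{v}{z}+\frac{1}{b^2}\right)^{s-1}\le M:=\max\!\left(\frac{1}{a^{2(s-1)}},\,\frac{1}{b^{2(s-1)}}\right),
\]
uniformly in $z$ and $v$. Since $Me^{-v}$ is integrable on $(0,\infty)$, dominated convergence gives
\[
\lim_{z\to\infty} z\,e^{z/b^2}\,\Phi_a^b(s,z) = \int_0^\infty \frac{1}{b^{2(s-1)}}e^{-v}\,dv = \frac{1}{b^{2(s-1)}},
\]
which is the claimed identity.

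The argument is essentially routine; the only mildly delicate point is producing the dominating function that is valid for all real $s$, which is handled by taking the maximum of the two endpoint values of $u^{s-1}$ on the interval $[1/b^2,1/a^2]$. No further obstacle is expected.
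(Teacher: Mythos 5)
Your proof is correct, but it proceeds by a genuinely different route than the paper's. The paper rewrites $\Phi_a^b(s,z) = z^{-s}\int_{z/b^2}^{z/a^2}t^{s-1}e^{-t}\,dt$, recognizes the desired limit as a $0/0$ indeterminate form $\int_{z/b^2}^{z/a^2}t^{s-1}e^{-t}\,dt \,/\, (z^{s-1}e^{-z/b^2})$, and applies L'Hospital's rule, using the Leibniz rule for the numerator and an elementary computation for the denominator. You instead perform the affine substitution $u=v/z+1/b^2$, which recenters the integral at the lower endpoint $1/b^2$ (where the exponential mass concentrates as $z\to\infty$), factors out $e^{-z/b^2}/z$ explicitly, and reduces the claim to a dominated convergence argument with the clean dominating function $M e^{-v}$, where $M$ is the maximum of $u^{s-1}$ over the fixed compact interval $[1/b^2,1/a^2]$. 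Both arguments are short; yours has the advantage of making the concentration mechanism visible and avoiding any differentiation under the integral sign, while the paper's L'Hospital computation is somewhat more mechanical and in particular must verify the $0/0$ form and track the subdominant $e^{z/b^2 - z/a^2}$ term. Your observation that the maximum of $u^{s-1}$ on $[1/b^2,1/a^2]$ is attained at an endpoint (monotonicity in $u$ for each fixed $s$) is exactly the right uniform bound and handles all real $s$ at once; there is no gap.
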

\begin{proof}
Recall that for $z>0$, we may express $\Phi_{a}^{b}$ in the following alternative form, by change of variables:
\[
\Phi_{a}^{b}(s,z)= \frac{1}{z^s}\int_{\frac{z}{b^2}}^{\frac{z}{a^2}} t^{s-1}e^{-t}dt,\quad z>0.
\]
Employing L'Hospital's rule yields the following:
\begin{align*}
\begin{aligned}
\lim_{z\to\infty} ze^{\frac{z}{b^2}}\Phi_{a}^{b}(s,z) &=\lim_{z\to\infty} \frac{\int_{\frac{z}{b^2}}^{\frac{z}{a^2}} t^{s-1}e^{-t}dt}{z^{s-1}e^{-\frac{z}{b^2}}} \\
&=\lim_{z\to\infty}\frac{\frac{1}{a^2}(\frac{z}{a^2})^{s-1}e^{-\frac{z}{a^2}}-\frac{1}{b^2}(\frac{z}{b^2})^{s-1}e^{-\frac{z}{b^2}}}{((s-1)z^{s-2}-\frac{1}{b^2}z^{s-1})e^{-\frac{z}{b^2}} }\\
&=\lim_{z\to\infty}\frac{ (\frac{1}{a^2})^{s}e^{\frac{z}{b^2}-\frac{z}{a^2}}- (\frac{1}{b^2})^{s} }{(\frac{s-1}{z}-\frac{1}{b^2} )  }\\
&=\frac{1}{b^{2(s-1)}}.
\end{aligned}
\end{align*}
\end{proof}
\begin{lemma}\label{Lemmaconstant}
For $0<a<b<\infty$, $s\in\mathbb R$, $k\geq0$, and for all $\tilde x\in\mathbb R^d$, we have
\[
\int_{\mathbb R^d} \Phi_{a}^{b}\left(s,\frac{|x-\tilde x|^2}{2}\right)|x-\tilde x|^kdx<\infty.
\]
\end{lemma}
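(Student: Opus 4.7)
The plan is to reduce the claim to a Gaussian moment computation in $x$ combined with a one-dimensional integral of $u^{s-1}$ over a compact subinterval of $(0,\infty)$, both of which are transparently finite. First, translation invariance of Lebesgue measure (the substitution $y=x-\tilde x$) immediately lets me assume without loss of generality that $\tilde x = 0$, so the task becomes showing
\[
\int_{\mathbb R^d} \Phi_a^b\!\left(s, \tfrac{|x|^2}{2}\right)|x|^k\, dx < \infty.
\]

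Next, because every factor of the integrand is non-negative, I would invoke Tonelli's theorem to interchange the integral over $[1/b^2,1/a^2]$ hidden inside the definition of $\Phi_a^b$ with the spatial integral, obtaining
\[
\int_{\mathbb R^d} \Phi_a^b\!\left(s, \tfrac{|x|^2}{2}\right)|x|^k\, dx = \int_{1/b^2}^{1/a^2} u^{s-1} \left(\int_{\mathbb R^d} e^{-u|x|^2/2}|x|^k\, dx\right) du.
\]
The inner integral is a scaled Gaussian moment: the change of variables $y=\sqrt{u}\,x$ produces $u^{-(d+k)/2}\int_{\mathbb R^d} e^{-|y|^2/2}|y|^k\, dy$, and the latter is a finite constant $C_{d,k}$ for every $k\geq 0$ because it is a standard Gaussian moment. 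Consequently the whole expression collapses to $C_{d,k}\int_{1/b^2}^{1/a^2} u^{\,s-1-(d+k)/2}\, du$.

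There is essentially no obstacle to overcome. The hypothesis $0<a<b<\infty$ forces the interval $[1/b^2,1/a^2]$ to be a compact subset of $(0,\infty)$, so the map $u\mapsto u^{\,s-1-(d+k)/2}$ is continuous and bounded on it, hence integrable, regardless of the sign of the exponent. This is precisely the structural reason why the statement can afford to allow $s$ to range over all of $\mathbb R$: the truncation of the defining integral of $\Phi_a^b$ away from both $u=0$ and $u=\infty$ absorbs any integrability concern at either endpoint.
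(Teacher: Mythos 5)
Your argument is correct, and it takes a genuinely different route from the paper's. The paper switches to polar coordinates, writing the integral as $\frac{2\pi^{d/2}}{\Gamma(d/2)}\int_0^\infty \Phi_a^b\bigl(s,\tfrac{r^2}{2}\bigr)r^{k+d-1}\,dr$, and then quotes the asymptotic estimate of Lemma \ref{Lemmaasymptotic} (namely that $\Phi_a^b(s,z)\sim b^{2(1-s)}e^{-z/b^2}/z$ as $z\to\infty$) to see that the radial integrand decays like a Gaussian, hence the integral converges. You instead unfold the definition of $\Phi_a^b$, apply Tonelli to swap the $u$-integral with the spatial integral, and collapse the inner integral to a scaled Gaussian moment, arriving at $C_{d,k}\int_{1/b^2}^{1/a^2}u^{\,s-1-(d+k)/2}\,du$, which is manifestly finite since the interval is compact in $(0,\infty)$. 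Both are sound; the paper's version is economical because it reuses Lemma \ref{Lemmaasymptotic}, which is needed elsewhere anyway, whereas your version is more self-contained (no asymptotic lemma required), yields an exact closed-form value of the integral as a byproduct, and makes the mechanism behind ``$s\in\mathbb R$ arbitrary'' completely explicit --- the truncation of the $u$-interval away from $0$ and $\infty$ absorbs any growth of $u^{s-1}$. One small point worth flagging in your write-up: the Tonelli step is legitimate precisely because $u^{s-1}e^{-zu}|x|^k\geq 0$ on the domain of integration, which you do state, and the Gaussian moment $\int_{\mathbb R^d}e^{-|y|^2/2}|y|^k\,dy$ is finite exactly because $k\geq 0$ controls the behavior at the origin; if that hypothesis were dropped you would need $k>-d$ instead, which is a constraint the polar-coordinate proof would also surface.
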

\begin{proof} We use Lemma \ref{Lemmaasymptotic} to deduce that
\begin{align*}
\begin{aligned}
&  \int_{\mathbb R^d}\Bigg[\Phi_a^b\left(s,\frac{|x-\tilde x|^2}{2}\right)|\tilde x-x|^k\Bigg]d x  
=   \int_0^\infty \Bigg[\Phi_a^b\left(s,\frac{r^2}{2}\right)r^k\Bigg]\frac{2\pi^{\frac{d}{2}}}{\Gamma(\frac{d}{2})}r^{d-1}dr\\
=&\frac{2\pi^{\frac{d}{2}}}{\Gamma(\frac{d}{2})}\int_0^\infty \Phi_a^b\left(s,\frac{r^2}{2}\right) r^{k+d-1}dr<\infty,
\end{aligned}
\end{align*}
where $\Gamma$ denotes the gamma function.
\end{proof}

\begin{lemma}\label{Lemmadecrease}
For any $0<a<b<\infty$, $s\in\mathbb R$, $z\in\mathbb R$, we have
\[
\frac{\partial}{\partial z}\Phi_{a}^{b}(s,z)=  -\Phi_{a}^{b}(s+1,z).
\]
In particular, for each $s\in\mathbb R$, $\Phi_{a}^{b}(s,\cdot)$ is infinitely differentiable, decreasing, and convex.
\end{lemma}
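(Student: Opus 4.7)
The plan is to reduce everything to differentiation under the integral sign, exploiting the fact that the integral in the definition of $\Phi_a^b$ is over the \emph{compact} interval $[1/b^2,\,1/a^2]$, which trivializes the usual dominated-convergence hypotheses.

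First I would fix $s\in\mathbb R$ and consider $\Phi_a^b(s,z)=\int_{1/b^2}^{1/a^2} u^{s-1}e^{-zu}\,du$ as a function of $z\in\mathbb R$. The integrand $f(u,z):=u^{s-1}e^{-zu}$ is jointly continuous on $[1/b^2,1/a^2]\times\mathbb R$, and its partial derivative $\partial_z f(u,z)=-u^s e^{-zu}$ is likewise jointly continuous on the same set. For any compact interval $K\subset\mathbb R$ of $z$-values, the continuous function $|\partial_z f|$ is bounded on $[1/b^2,1/a^2]\times K$ by some constant, and constants are integrable over the compact $u$-interval. The standard Leibniz rule therefore applies and yields
\[
\frac{\partial}{\partial z}\Phi_a^b(s,z) \;=\; \int_{1/b^2}^{1/a^2}\!\!-u^{s}e^{-zu}\,du \;=\; -\Phi_a^b(s+1,z),
\]
which is the main identity.

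Next, I would iterate. Since the same argument applies with $s$ replaced by $s+1$, we obtain by induction that $\frac{\partial^n}{\partial z^n}\Phi_a^b(s,z)=(-1)^n\Phi_a^b(s+n,z)$ for every $n\in\mathbb N$, and each such derivative exists and is continuous in $z$. This establishes that $\Phi_a^b(s,\cdot)$ is of class $C^\infty$.

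For the monotonicity and convexity claims, I would just note that for every real $\tau$ and every $z\in\mathbb R$, the integrand $u^{\tau-1}e^{-zu}$ is strictly positive on the non-degenerate interval $[1/b^2,1/a^2]$, so $\Phi_a^b(\tau,z)>0$. Applying this with $\tau=s+1$ gives $\frac{\partial}{\partial z}\Phi_a^b(s,z)=-\Phi_a^b(s+1,z)<0$, so $\Phi_a^b(s,\cdot)$ is strictly decreasing; applying it with $\tau=s+2$ gives $\frac{\partial^2}{\partial z^2}\Phi_a^b(s,z)=\Phi_a^b(s+2,z)>0$, so $\Phi_a^b(s,\cdot)$ is strictly convex. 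There is essentially no obstacle here — the only thing one has to be mildly careful about is that $s$ is allowed to be any real number (even negative), but since $u$ ranges over a compact subset of $(0,\infty)$ bounded away from both $0$ and $\infty$, the factor $u^{s-1}$ is bounded above and below uniformly in $u$, so no integrability issue ever arises.
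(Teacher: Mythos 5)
Your proof is correct and takes essentially the same route as the paper — differentiation under the integral sign — with the paper invoking dominated convergence and you using the Leibniz rule justified by compactness of the $u$-interval. You are slightly more thorough in that you explicitly iterate to obtain all higher derivatives and spell out why positivity of the integrand gives strict monotonicity and convexity, whereas the paper leaves those as immediate consequences.
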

\begin{proof}
By Lebesgue's dominated convergence theorem we get
\[
\frac{d}{dz}\Phi_{a}^{b}(s,z)= \int_{\frac{1}{b^2}}^\frac{1}{a^2}\frac{d}{dz}\left(u^{s-1}e^{-zu}\right)du=\int_{\frac{1}{b^2}}^\frac{1}{a^2}\left(-u^{s}e^{-zu}\right)du=-\Phi_{a}^{b}(s+1,z).
\]
\end{proof}

\begin{lemma}\label{L3}
For any $0<b<\infty$, $s>0$ we have
\[
\lim_{a\searrow0}a^{2s}\Phi_{a}^{b}(s,0)=\frac{1}{s}.
\]
\end{lemma}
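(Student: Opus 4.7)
The plan is a direct calculation: since the exponential factor is trivialised by setting $z=0$, the integrand in the definition of $\Phi_a^b(s,0)$ reduces to the power function $u^{s-1}$. First I would write
\[
\Phi_a^b(s,0)=\int_{1/b^2}^{1/a^2}u^{s-1}\,du,
\]
and, using $s>0$ so that the antiderivative $u^s/s$ is well defined and continuous on $(0,\infty)$, evaluate this to
\[
\Phi_a^b(s,0)=\frac{1}{s}\left(\frac{1}{a^{2s}}-\frac{1}{b^{2s}}\right).
\]

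Next, I would multiply by $a^{2s}$ to obtain
\[
a^{2s}\Phi_a^b(s,0)=\frac{1}{s}-\frac{1}{s}\cdot\frac{a^{2s}}{b^{2s}}.
\]
Since $s>0$ and $0<b<\infty$, the second term tends to $0$ as $a\searrow 0$, yielding the desired limit $\frac{1}{s}$.

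There is no real obstacle here: the hypothesis $s>0$ is precisely what makes the antiderivative valid and ensures $a^{2s}\to 0$, and the boundedness of $b$ keeps $b^{-2s}$ finite, so the argument is just an elementary evaluation followed by passage to the limit.
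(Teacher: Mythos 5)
Your proof is correct and matches the paper's own argument: both evaluate $\Phi_a^b(s,0)=\int_{1/b^2}^{1/a^2}u^{s-1}\,du=\tfrac{1}{s}\bigl(a^{-2s}-b^{-2s}\bigr)$ directly, multiply by $a^{2s}$, and let $a\searrow 0$.
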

\begin{proof} A straightforward calculation yields
\[
\lim_{a\searrow0}\Phi_{a}^{b}(s,0)=\lim_{a\searrow0}\bigg(a^{2s}\int_{\frac{1}{b^2}}^\frac{1}{a^2} u^{s-1} du\bigg)= \lim_{a\searrow0}\frac{1}{s}\left(1-\frac{a^{2s}}{b^{2s}}\right)=\frac{1}{s}.
\]
\end{proof}

\begin{lemma}\label{L4}
For any $0<b<\infty$, $s>0$ and a function $z:[0,\infty)\to\mathbb R$ which is continuous and positive at zero,  we have
\[
\lim_{a\searrow0}\Phi_{a}^{b}(s,z(a))=\frac{1}{\big(z(0)\big)^s}\int_{\frac{z(0)}{b^2}}^{\infty} t^{s-1}e^{-t}dt<\infty.
\]
\end{lemma}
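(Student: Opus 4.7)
The plan is to reduce the statement to a continuity-of-integrals-with-respect-to-endpoints argument after rewriting $\Phi_a^b(s,z(a))$ in the incomplete-gamma form already noted just after \eqref{phi}.

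First, since $z$ is continuous at $0$ with $z(0)>0$, there exists $a_0\in(0,b)$ such that $z(a)>0$ for every $a\in[0,a_0]$. For $a\in(0,a_0)$, the substitution $t=z(a)u$ in the defining integral
\[
\Phi_a^b(s,z(a))=\int_{1/b^2}^{1/a^2}u^{s-1}e^{-z(a)u}\,du
\]
yields
\[
\Phi_a^b(s,z(a))=\frac{1}{(z(a))^s}\int_{z(a)/b^2}^{z(a)/a^2}t^{s-1}e^{-t}\,dt.
\]
By continuity of $z$, the prefactor satisfies $(z(a))^{-s}\to (z(0))^{-s}$, the lower endpoint satisfies $z(a)/b^2\to z(0)/b^2$, and since $z(a)\to z(0)>0$ while $a^2\searrow 0$, the upper endpoint satisfies $z(a)/a^2\to\infty$.

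It remains to show that
\[
\int_{z(a)/b^2}^{z(a)/a^2}t^{s-1}e^{-t}\,dt\;\longrightarrow\;\int_{z(0)/b^2}^{\infty}t^{s-1}e^{-t}\,dt\quad\text{as }a\searrow 0.
\]
I would prove this by splitting at a fixed $M>z(0)/b^2$ and handling the two pieces separately. On $[z(a)/b^2,M]$, for $a$ small the integrand $t^{s-1}e^{-t}$ is bounded (the lower endpoint is bounded away from $0$) and depends continuously on the lower endpoint, so $\int_{z(a)/b^2}^{M}t^{s-1}e^{-t}\,dt\to\int_{z(0)/b^2}^{M}t^{s-1}e^{-t}\,dt$. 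For the tail on $[M,z(a)/a^2]$, since $z(a)/a^2<\infty$ for $a>0$ we have
\[
0\le \int_M^{z(a)/a^2}t^{s-1}e^{-t}\,dt\le \int_M^{\infty}t^{s-1}e^{-t}\,dt,
\]
and the right-hand side is finite and tends to $0$ as $M\to\infty$ because exponential decay dominates the polynomial factor for any $s\in\mathbb R$ (in particular for $s>0$). The usual $\varepsilon/2$ argument then gives the desired limit.

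Finally, the finiteness of the limit follows from the same exponential-decay observation, since $z(0)/b^2>0$ implies $\int_{z(0)/b^2}^{\infty}t^{s-1}e^{-t}\,dt<\infty$. There is no serious obstacle; the only subtle point is guaranteeing $z(a)>0$ for $a$ small, which is secured by the hypothesis that $z$ is continuous and positive at $0$ and justifies the change of variables and the comparison with $\int_M^\infty t^{s-1}e^{-t}dt$.
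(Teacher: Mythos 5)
Your proof is correct and follows essentially the same route as the paper: rewrite $\Phi_a^b(s,z(a))$ in the incomplete-gamma form $(z(a))^{-s}\int_{z(a)/b^2}^{z(a)/a^2}t^{s-1}e^{-t}\,dt$, observe that the prefactor and lower endpoint converge by continuity while the upper endpoint diverges, and pass to the limit in the integral. The only cosmetic difference is that the paper disposes of the last step with a one-line appeal to the dominated convergence theorem, whereas you spell it out by hand via a split at a cutoff $M$ and an $\varepsilon/2$ argument; both are valid justifications of the same limit.
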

\begin{proof}
By Lebesgue's dominated convergence theorem we have
\[
\lim_{a\searrow0}\Phi_{a}^{b}(s,z(a))=\lim_{a\searrow0}  \frac{1}{(z(a))^s}\int_{\frac{z(a)}{b^2}}^{\frac{z(a)}{a^2}} t^{s-1}e^{-t}dt=\frac{1}{(z(0))^s}\int_{\frac{z(0)}{b^2}}^\infty t^{s-1}e^{-t}dt<\infty.
\]
\end{proof}

\begin{lemma}\label{L5}
For any $0<b<\infty$, $s>0$
and  $k>0$ we have
\[
\lim_{a\searrow0}a^{2s}\Phi_{a}^{b}(s,ka^2)=\frac{1}{k^s}\int_{0}^{k} t^{s-1}e^{-t}dt<\infty.
\]
\end{lemma}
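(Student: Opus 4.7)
The plan is to reduce the limit to a routine application of the alternate representation of $\Phi_a^b$ given earlier in the excerpt, namely
\[
\Phi_a^b(s,z) = \frac{1}{z^s}\int_{z/b^2}^{z/a^2} t^{s-1}e^{-t}\,dt \qquad (z>0),
\]
which is valid since $z=ka^2>0$ for $a>0$. This representation is exactly the tool that makes the $a^{2s}$ factor cancel cleanly, so the plan is to substitute $z=ka^2$ and read off the limit.

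Substituting $z=ka^2$, the upper integration limit becomes $\frac{ka^2}{a^2}=k$, which is independent of $a$ — this is the key observation that makes the statement work. One then obtains
\[
a^{2s}\Phi_a^b(s,ka^2)
= \frac{a^{2s}}{(ka^2)^s}\int_{ka^2/b^2}^{k} t^{s-1}e^{-t}\,dt
= \frac{1}{k^s}\int_{ka^2/b^2}^{k} t^{s-1}e^{-t}\,dt.
\]
As $a\searrow 0$, the lower limit $ka^2/b^2 \to 0$, so it remains to justify passing to the limit inside the integral and to confirm that the resulting integral is finite.

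For the passage to the limit, I would apply Lebesgue's dominated convergence theorem to the integrands $t^{s-1}e^{-t}\charf_{[ka^2/b^2,\,k]}(t)$, dominated by $t^{s-1}e^{-t}\charf_{(0,k]}(t)$; alternatively one can simply invoke absolute continuity of the Lebesgue integral with respect to its endpoints. For finiteness, the only potential issue is integrability near $t=0$, but since $s>0$ we have $\int_0^k t^{s-1}e^{-t}\,dt \le \int_0^k t^{s-1}\,dt = k^s/s<\infty$. Combining these two facts yields
\[
\lim_{a\searrow 0} a^{2s}\Phi_a^b(s,ka^2)
= \frac{1}{k^s}\int_0^k t^{s-1}e^{-t}\,dt < \infty,
\]
as required.

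There is no genuine obstacle here; the only subtle point worth flagging in the write-up is that one must not appeal to the representation with $1/a^2$ as the upper $u$-endpoint naively, since that range blows up as $a\searrow 0$. The change of variables $u=t/z$ with $z=ka^2$ rescales this blow-up into the fixed endpoint $k$, which is precisely what resolves the limit.
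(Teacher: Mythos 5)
Your proposal is correct and follows essentially the same route as the paper: rewrite $\Phi_a^b$ via the change of variables $u=t/z$ with $z=ka^2$, observe that the upper limit becomes the fixed value $k$, cancel the $a^{2s}$ prefactor, and pass to the limit by dominated convergence. The paper's proof is a more terse version of the same computation, also invoking dominated convergence; your additional remark on integrability near $t=0$ (using $s>0$) makes the finiteness claim explicit rather than implicit.
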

\begin{proof}
By Lebesgue's dominated convergence theorem we have
\[
\lim_{a\searrow0}a^{2s}\Phi_{a}^{b}(s,ka^2)= \lim_{a\searrow0}\frac{1}{k^s}\int_{\frac{ka^2}{b^2}}^{k} t^{s-1}e^{-t}dt=\lim_{a\searrow0}\frac{1}{k^s}\int_{0}^{k} t^{s-1}e^{-t}dt<\infty.
\]
\end{proof}

\subsection{Auxiliary lemmas concerning $L_{a}^{b}$}
In this subsection we state and prove some basic properties of $L_{a}^{b}$ that is used throught the paper. Recall that $\mu_\mathrm{data}$ is a probability measure on $\mathbb R^d$, and for $0<a<b<\infty$, $s\in\mathbb R$, $x\in\mathbb R^d$, we define
\[
L_{a}^{b}(s,x):=\int_{\mathbb R^d}\Phi_a^b\left(s,\frac{|x-\tilde x|^2}{2}\right)d\mu_\mathrm{data}(\tilde x). 
\]
\begin{lemma}\label{Lemmacinfty}
For any given $0<a<b<\infty$, $s\in\mathbb R$, for any $d$-dimensional multi-index $\alpha$, we have
\[
\partial_x^\alpha L_{a}^{b}(s,x)=\int_{\mathbb R^d}\partial_x^\alpha \Phi_a^b\left(s,\frac{|x-\tilde x|^2}{2}\right)d\mu_\mathrm{data}(\tilde x).
\]
 In particular, $L_{a}^{b}(s,\cdot)$ is of class $C^\infty$.
\end{lemma}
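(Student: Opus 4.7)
The plan is to prove this by induction on $|\alpha|$, justifying differentiation under the integral sign at each step via the dominated convergence theorem applied to difference quotients. The base case $|\alpha|=0$ is immediate from the definition of $L_a^b$. The inductive step reduces to producing, for each fixed $x$, a neighborhood on which the partial derivatives of the integrand are bounded in $(x,\tilde x)$ by a $\mu_\mathrm{data}$-integrable function of $\tilde x$.

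First I would expand the integrand derivatives explicitly. By Lemma~\ref{Lemmadecrease} and the chain rule,
\[
\partial_{x_j}\Phi_a^b\!\left(s,\tfrac{|x-\tilde x|^2}{2}\right) \;=\; -(x_j-\tilde x_j)\,\Phi_a^b\!\left(s+1,\tfrac{|x-\tilde x|^2}{2}\right),
\]
so iterating, $\partial_x^\alpha\Phi_a^b(s,|x-\tilde x|^2/2)$ is a finite sum of terms of the form $P_\beta(x-\tilde x)\,\Phi_a^b(s+k,|x-\tilde x|^2/2)$, where $P_\beta$ is a polynomial of degree at most $|\alpha|$ and $0\le k\le|\alpha|$.

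The key estimate is a uniform bound. Since $u\ge 1/b^2$ throughout the defining integral of $\Phi_a^b$, we have $e^{-zu}\le e^{-z/b^2}$ for every $z\ge 0$, yielding
\[
\Phi_a^b(s+k,z)\;\le\; e^{-z/b^2}\int_{1/b^2}^{1/a^2} u^{s+k-1}\,du \;=\; C_{a,b,s,k}\,e^{-z/b^2}.
\]
Substituting $z=|x-\tilde x|^2/2$, the polynomial prefactor is dominated by the Gaussian-type decay, so
\[
\left|P_\beta(x-\tilde x)\,\Phi_a^b\!\left(s+k,\tfrac{|x-\tilde x|^2}{2}\right)\right| \;\le\; C\bigl(1+|x-\tilde x|^{|\alpha|}\bigr)\,e^{-|x-\tilde x|^2/(2b^2)} \;\le\; M,
\]
for a finite constant $M=M(a,b,s,\alpha)$ that is independent of both $x$ and $\tilde x$. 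Since $\mu_\mathrm{data}$ is a probability measure, the constant $M$ is trivially $\mu_\mathrm{data}$-integrable, which is exactly what DCT requires in order to exchange $\partial_x^\alpha$ with the integral. Continuity of $\partial_x^\alpha L_a^b$ in $x$ follows from one more application of DCT, using the same uniform bound together with continuity of the integrand in $x$; iterating in $|\alpha|$ gives $L_a^b(s,\cdot)\in C^\infty$.

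The main obstacle is essentially bookkeeping for the iterated chain rule; the only analytic input is that the strict positivity of the lower endpoint $1/b^2$ in the definition of $\Phi_a^b$ produces Gaussian-type decay in $|x-\tilde x|$, which is already implicit in the definition and in Lemma~\ref{Lemmadecrease}. Note that the argument works with no assumption on $\mu_\mathrm{data}$ beyond being a probability measure, since the dominating bound $M$ does not depend on $\tilde x$ at all.
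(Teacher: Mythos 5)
Your proof is correct and follows the same overall plan as the paper: induction on the order of the multi-index, an explicit formula for the iterated derivatives of the integrand via Lemma~\ref{Lemmadecrease} and the chain rule (yielding terms $P_\beta(x-\tilde x)\,\Phi_a^b(s+k,|x-\tilde x|^2/2)$), a uniform bound independent of $\tilde x$, and then the dominated convergence theorem applied inductively to difference quotients. The one substantive difference is how the uniform bound is justified. The paper invokes Lemma~\ref{Lemmaasymptotic}, the L'H\^opital-based asymptotic $\lim_{z\to\infty}ze^{z/b^2}\Phi_a^b(s,z)=b^{-2(s-1)}$, from which boundedness of the polynomial-times-$\Phi_a^b$ products follows. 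You instead derive the elementary exponential bound $\Phi_a^b(s+k,z)\le C_{a,b,s,k}\,e^{-z/b^2}$ directly from the definition (using $u\ge 1/b^2$ on the domain of integration), and then observe that a polynomial in $x-\tilde x$ times $e^{-|x-\tilde x|^2/(2b^2)}$ is globally bounded. Your route is more self-contained and gives an explicit quantitative constant, bypassing the asymptotic lemma; the paper's route simply reuses machinery it already has on hand. Both are valid, and in both cases the dominating function is a constant, so no hypothesis beyond $\mu_\mathrm{data}$ being a probability measure is needed. There is no gap.
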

\begin{proof}
By Lemma \ref{Lemmadecrease}, for any $\tilde x\in\mathbb R^d$ the function $x\mapsto\Phi_a^b(s,\frac{|x-\tilde x|^2}{2})$ has partial derivatives of all order, and they all have the form of $\Phi_a^b(s+k,\frac{|x-\tilde x|^2}{2})$ multiplied by a polynomial in $x-\tilde x$, for some nonnegative integer $k\geq0$. By Lemma \ref{Lemmaasymptotic}, each of these partial derivatives is uniformly bounded in $x\in\mathbb R^d$. Hence we can inductively apply the dominated convergence theorem to finish the proof.
\end{proof}

\begin{lemma}\label{Lemmafraction}
For $0<a<b<\infty$, $s_1,s_2\in\mathbb R$, and a compactly supported probability measure $\mu_\mathrm{data}$ on $\mathbb R^d$, we have
\[
\lim_{R\to\infty}\sup_{|x|\geq R}\left|\frac{ L_a^b\left(s_1,x\right) }{L_a^b\left(s_2,x\right)}-b^{2(s_2-s_1)}\right|=0,
\]
\end{lemma}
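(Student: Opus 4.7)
The plan is to isolate the dominant behavior of $\Phi_a^b(s,z)$ for large $z$ via Lemma \ref{Lemmaasymptotic}, and then use the compact support of $\mu_\mathrm{data}$ to promote pointwise convergence into uniform convergence on $\{|x|\geq R\}$.

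First, I would rewrite Lemma \ref{Lemmaasymptotic} in the form
\[
\Phi_a^b(s,z)=\frac{e^{-z/b^2}}{z}\bigl(b^{2(1-s)}+\delta_s(z)\bigr),\qquad \delta_s(z)\to 0 \text{ as } z\to\infty.
\]
Substituting $z=|x-\tilde x|^2/2$ and integrating against $\mu_\mathrm{data}$ yields the decomposition
\[
L_a^b(s,x)=b^{2(1-s)}\,M(x)+E_s(x),
\]
where
\[
M(x):=\int_{\mathbb R^d}\frac{2\,e^{-|x-\tilde x|^2/(2b^2)}}{|x-\tilde x|^2}\,d\mu_\mathrm{data}(\tilde x)>0,
\]
and $E_s(x)$ is the analogous integral with the extra factor $\delta_s(|x-\tilde x|^2/2)$ inserted. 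A one-line estimate then gives $|E_s(x)|\leq M(x)\cdot\sup_{\tilde x\in\mathrm{supp}(\mu_\mathrm{data})}\bigl|\delta_s(|x-\tilde x|^2/2)\bigr|$.

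Second, using that $\mathrm{supp}(\mu_\mathrm{data})$ is contained in some ball $B(0,r_0)$, we have $|x-\tilde x|\geq |x|-r_0$ uniformly in $\tilde x$ from the support, so $|x-\tilde x|^2/2\to\infty$ uniformly in $\tilde x$ as $|x|\to\infty$. Combined with $\delta_s(z)\to 0$, this gives
\[
\sup_{|x|\geq R}\ \sup_{\tilde x\in\mathrm{supp}(\mu_\mathrm{data})}\bigl|\delta_s(|x-\tilde x|^2/2)\bigr|\longrightarrow 0\quad\text{as }R\to\infty,
\]
whence $\sup_{|x|\geq R}|E_s(x)|/M(x)\to 0$. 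Since $M(x)>0$, we may then write
\[
\frac{L_a^b(s_1,x)}{L_a^b(s_2,x)}=\frac{b^{2(1-s_1)}+E_{s_1}(x)/M(x)}{b^{2(1-s_2)}+E_{s_2}(x)/M(x)},
\]
whose denominator stays bounded away from $0$ for $R$ large, and deduce that the quotient converges uniformly on $\{|x|\geq R\}$ to $b^{2(1-s_1)}/b^{2(1-s_2)}=b^{2(s_2-s_1)}$.

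The only nontrivial step is upgrading the pointwise decay of $\delta_s$ to uniform decay over $\mathrm{supp}(\mu_\mathrm{data})$; this is exactly where compactness of the support is essential, since otherwise $|x-\tilde x|$ could remain bounded even as $|x|\to\infty$. Everything else is algebraic rearrangement and elementary bounds on the error integrals.
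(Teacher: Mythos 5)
Your proof is correct, and it takes a different route from the paper's even though both rest on the same two pillars (Lemma \ref{Lemmaasymptotic} for the large-$z$ asymptotics of $\Phi_a^b$, and the compact support of $\mu_\mathrm{data}$ to make $|x-\tilde x|^2/2\to\infty$ uniformly in $\tilde x$). Where you diverge is the algebraic mechanism: you factor $\Phi_a^b(s,z)=\tfrac{e^{-z/b^2}}{z}\bigl(b^{2(1-s)}+\delta_s(z)\bigr)$ explicitly, pull out a common ``profile'' function $M(x)$ independent of $s$, and show the error ratio $E_s(x)/M(x)$ vanishes uniformly, so the quotient $L_a^b(s_1,x)/L_a^b(s_2,x)$ becomes a quotient of numbers converging to $b^{2(1-s_1)}$ and $b^{2(1-s_2)}$. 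The paper instead never isolates a leading term; it observes that $L_a^b(s_1,x)/L_a^b(s_2,x)$ is a $\Phi_a^b(s_2,\cdot)\,d\mu_\mathrm{data}$-weighted average of the pointwise ratio $\Phi_a^b(s_1,\cdot)/\Phi_a^b(s_2,\cdot)$, squeezes it between $\inf_{\tilde x}$ and $\sup_{\tilde x}$ of that pointwise ratio (a mediant inequality), and then shows both bounds tend uniformly to $b^{2(s_2-s_1)}$. Your version is a bit more constructive and makes the common factor $M(x)$ visible, which requires the minor observation that $M(x)>0$ and the denominator stays bounded away from zero for $|x|$ large (true since $|x-\tilde x|\geq|x|-r_0>0$); the paper's sandwich argument sidesteps any division by an auxiliary quantity. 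Both are essentially the same length and strength.
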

\begin{proof}
By Lemma \ref{Lemmaasymptotic}, we have
\[
\lim_{r\to\infty}\frac{ \Phi_{a}^{b}\left(s_1,r\right)}{ \Phi_{a}^{b}\left(s_2,r\right)} =\lim_{r\to\infty}\frac{ re^{\frac{r}{b^2}}\Phi_{a}^{b}\left(s_1,r\right)}{  re^{\frac{r}{b^2}}\Phi_{a}^{b}\left(s_2,r\right)} = \frac{b^{2s_2-2}}{b^{2s_1-2}} =b^{ 2(s_2-s_1)}
\]
Also note that since $\mu_\mathrm{data}$ is compactly supported, we have 
\[
\frac{|x-\tilde x|^2}{2}\geq\frac{( |x|-|\tilde x|)^2}{2}\geq \frac{( |x|-C)^2}{2}\quad\mbox{for}\quad |x|\geq C,~\tilde x\in\operatorname{supp}(\mu_\mathrm{data}),\]
with
\[ C:=\sup_{\tilde x\in\operatorname{supp}(\mu_\mathrm{data})}|\tilde x|<\infty.
\]
Hence $\frac{|x-\tilde x|^2}{2}\to\infty$ as $|x|\to\infty$, uniformly in $\tilde x\in\operatorname{supp}(\mu_\mathrm{data})$. Hence
\begin{align}\label{infsup}
\begin{aligned}
\lim_{R\to\infty}\inf_{|x|\geq R} \inf_{\tilde x\in\operatorname{supp}(\mu_\mathrm{data})}\frac{ \Phi_{a}^{b}\left(s_1,\frac{|x-\tilde x|^2}{2}\right)}{ \Phi_{a}^{b}\left(s_2,\frac{|x-\tilde x|^2}{2}\right)}
& =\lim_{R\to\infty}\sup_{|x|\geq R} \sup_{\tilde x\in\operatorname{supp}(\mu_\mathrm{data})}\frac{ \Phi_{a}^{b}\left(s_1,\frac{|x-\tilde x|^2}{2}\right)}{ \Phi_{a}^{b}\left(s_2,\frac{|x-\tilde x|^2}{2}\right)}  \\
&=b^{2(s_2-s_1)}.
\end{aligned}
\end{align}
On the other hand, from
\begin{align*}
\begin{aligned}
&\quad \frac{ L_a^b\left(s_1,x\right) }{L_a^b\left(s_2,x\right)}\int_{\mathbb R^d}\Phi_{a}^{b}\left(s_2,\frac{|x-\tilde x|^2}{2}\right)d\mu_\mathrm{data}(\tilde x)\\
&=\int_{\mathbb R^d}\left(\Phi_{a}^{b}\left(s_2,\frac{|x-\tilde x|^2}{2}\right)\times \frac{ \Phi_{a}^{b}\left(s_1,\frac{|x-\tilde x|^2}{2}\right)}{ \Phi_{a}^{b}\left(s_2,\frac{|x-\tilde x|^2}{2}\right)}\right)d\mu_\mathrm{data}(\tilde x),
\end{aligned}
\end{align*}
we get
\begin{equation}\label{fracineq}
\inf_{\tilde x\in\operatorname{supp}(\mu_\mathrm{data})} \frac{ \Phi_{a}^{b}\left(s_1,\frac{|x-\tilde x|^2}{2}\right)}{ \Phi_{a}^{b}\left(s_2,\frac{|x-\tilde x|^2}{2}\right)} 
\leq \frac{ L_a^b\left(s_1,x\right) }{L_a^b\left(s_2,x\right)}\leq\sup_{\tilde x\in\operatorname{supp}(\mu_\mathrm{data})} \frac{ \Phi_{a}^{b}\left(s_1,\frac{|x-\tilde x|^2}{2}\right)}{ \Phi_{a}^{b}\left(s_2,\frac{|x-\tilde x|^2}{2}\right)} 
\end{equation}
Now we combine \eqref{infsup} and \eqref{fracineq} to get
\begin{align*}
\begin{aligned}
&\quad \lim_{R\to\infty}\sup_{|x|\geq R}\left|\frac{ L_a^b\left(s_1,x\right) }{L_a^b\left(s_2,x\right)}-b^{2(s_2-s_1)}\right|\\
&= \lim_{R\to\infty}\sup_{|x|\geq R}\max\left\{\frac{ L_a^b\left(s_1,x\right) }{L_a^b\left(s_2,x\right)}-b^{2(s_2-s_1)},- \frac{ L_a^b\left(s_1,x\right) }{L_a^b\left(s_2,x\right)}+b^{2(s_2-s_1)}\right\}\\ 
&= \lim_{R\to\infty}\max\left\{\sup_{|x|\geq R}\left(\frac{ L_a^b\left(s_1,x\right) }{L_a^b\left(s_2,x\right)}-b^{2(s_2-s_1)}\right),-\inf_{|x|\geq R}\left(\frac{ L_a^b\left(s_1,x\right) }{L_a^b\left(s_2,x\right)}-b^{2(s_2-s_1)}\right)\right\}\\ 
&\leq\lim_{R\to\infty}\max\Bigg\{\sup_{|x|\geq R}\sup_{\tilde x\in\operatorname{supp}(\mu_\mathrm{data})}\frac{ \Phi_{a}^{b}\left(s_1,\frac{|x-\tilde x|^2}{2}\right)}{ \Phi_{a}^{b}\left(s_2,\frac{|x-\tilde x|^2}{2}\right)}-b^{2(s_2-s_1)},\\
&\qquad\qquad\qquad\qquad -\inf_{|x|\geq R}\inf_{\tilde x\in\operatorname{supp}(\mu_\mathrm{data})}\frac{ \Phi_{a}^{b}\left(s_1,\frac{|x-\tilde x|^2}{2}\right)}{ \Phi_{a}^{b}\left(s_2,\frac{|x-\tilde x|^2}{2}\right)}+b^{2(s_2-s_1)}\Bigg\}\\
&=\max\left\{b^{2(s_2-s_1)}-b^{2(s_2-s_1)},-b^{2(s_2-s_1)}+b^{2(s_2-s_1)}\right\}=0.
\end{aligned}
\end{align*}
Hence we have
\[
\lim_{R\to\infty}\sup_{|x|\geq R}\left|\frac{ L_a^b\left(s_1,x\right) }{L_a^b\left(s_2,x\right)}-b^{2(s_2-s_1)}\right|=0.
\]
\end{proof}

\begin{lemma}\label{Lemmacompact}
Assume that $\mu_{\mathrm{data}}$ is a compactly supported probability measure. Then for any $0<a<b<\infty$, $s\in\mathbb R$, and $0<c<\infty$, the superlevel set $\{x\in\mathbb R^d:L_a^b(s,x)\geq  c\}$ is compact.
\end{lemma}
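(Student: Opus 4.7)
The plan is to verify the two defining conditions of compactness in $\mathbb{R}^d$ separately: that the superlevel set $S_c := \{x \in \mathbb{R}^d : L_a^b(s,x) \geq c\}$ is closed, and that it is bounded.

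Closedness is immediate from the smoothness of $L_a^b(s,\cdot)$. Indeed, Lemma \ref{Lemmacinfty} already shows that $L_a^b(s,\cdot)$ is of class $C^\infty$, hence in particular continuous, and $S_c$ is the preimage of the closed set $[c,\infty)$ under a continuous function.

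For boundedness, I would show the stronger statement that $L_a^b(s,x) \to 0$ as $|x| \to \infty$, which forces $S_c$ to lie inside some large closed ball. Let $K := \mathrm{supp}(\mu_{\mathrm{data}})$ and $C := \sup_{\tilde x \in K}|\tilde x| < \infty$. For $|x| \geq C$ and $\tilde x \in K$ we have $|x - \tilde x|^2/2 \geq (|x| - C)^2/2$, so $|x - \tilde x|^2/2 \to \infty$ uniformly in $\tilde x \in K$ as $|x| \to \infty$. By Lemma \ref{Lemmaasymptotic}, $z \mapsto \Phi_a^b(s,z)$ decays exponentially (in fact, $z e^{z/b^2} \Phi_a^b(s,z) \to b^{-2(s-1)}$), and in particular $\Phi_a^b(s,z) \to 0$ as $z \to \infty$. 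Combining these two facts,
\[
\sup_{\tilde x \in K} \Phi_a^b\!\left(s, \tfrac{|x-\tilde x|^2}{2}\right) \longrightarrow 0 \quad \text{as } |x| \to \infty.
\]
Since $\mu_{\mathrm{data}}(K) = 1$, it follows that $L_a^b(s,x) \leq \sup_{\tilde x \in K}\Phi_a^b(s,|x-\tilde x|^2/2) \to 0$ as $|x|\to\infty$.

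Therefore there exists $R > 0$ such that $L_a^b(s,x) < c$ whenever $|x| \geq R$, so $S_c \subset \overline{B(0,R)}$ is bounded. A closed and bounded subset of $\mathbb{R}^d$ is compact by Heine--Borel, completing the proof. The only subtlety is establishing the uniform-in-$\tilde x$ decay of $\Phi_a^b(s,|x-\tilde x|^2/2)$, which is where the compact support hypothesis on $\mu_{\mathrm{data}}$ is essential; without it the superlevel sets could fail to be bounded.
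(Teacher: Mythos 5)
Your proof is correct and follows essentially the same route as the paper: closedness from continuity, boundedness from the decay of $\Phi_a^b(s,\cdot)$ (Lemma \ref{Lemmaasymptotic}) combined with the uniform lower bound $|x-\tilde x|^2/2 \geq (|x|-C)^2/2$ on the compact support of $\mu_{\mathrm{data}}$, and then Heine--Borel. The only cosmetic difference is that you state the slightly stronger fact that $L_a^b(s,x)\to 0$ as $|x|\to\infty$, whereas the paper just picks an explicit radius making $L_a^b(s,x)\leq c/2$.
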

\begin{proof}
Since the closedness of the superlevel set can be straightforwardly seen by the continuity of $L_a^b(s,\cdot)$, it only remains to show the boundedness. 
By Lemma \ref{Lemmaasymptotic},  there exists $R>0$ such that $z\geq R$ implies $\Phi_a^b(s,z)\leq \frac{c}{2}$. Also note that since $\mu_\mathrm{data}$ is compactly supported, we have
\[
C:=\sup_{\tilde x\in\operatorname{supp}(\mu_\mathrm{data})}|\tilde x|<\infty.
\]
Hence, for any $x\in\mathbb R^d$ satisfying $|x|\geq \sqrt{2R}+C$, we have $\Phi_a^b(s,\frac{|x-\tilde x|^2}{2})\leq \frac{c}{2}$ for all $\tilde x\in\operatorname{supp}(\mu_\mathrm{data})$, so that $L_a^b(s,x)\leq \frac{c}{2}$. From this we deduce that the superlevel set $\{x\in\mathbb R^d:L_a^b(s,x)\geq c\}$ is bounded, finishing the proof.
\end{proof}
%

\subsection{Proof of Proposition \ref{T1}}\label{T1proof}
We repeat the argument \eqref{fubini} for $\lambda(t)=(g(t))^2$ after we show that the three integrals in \eqref{3int} have finite values for a.e. $x\in\mathbb R^d$. Those integrals are indeed finite for a.e. $x\in\mathbb R^d$, as can be seen from the below calculations:
\begin{align}\label{denominator}
\begin{aligned}
\int_\varepsilon^T \sigma_t^{-2\alpha} \big(g(t)\big)^2  p_t(  x)dt=&\int_\varepsilon^T \sigma_t^{-2\alpha}\big(g(t)\big)^2 \Bigg(\int_{\mathbb R^d} \frac{e^{-\frac{| x- \tilde x|^2}{2\sigma_t^2}}}{( 2\pi\sigma_t^2)^{\frac{d}{2}}}d\mu_\mathrm{data}(\tilde x) \Bigg)dt\\
=&\frac{1}{( 2\pi )^{\frac{d}{2}}} \int_{\mathbb R^d }\int_\varepsilon^T  e^{-\frac{| x-  \tilde x |^2}{2\sigma_t^2}} \sigma_t^{-d-2\alpha}\big(g(t)\big)^2  dt d\mu_\mathrm{data}(\tilde x)\\
=&\frac{1}{( 2\pi )^{\frac{d}{2}}} \int_{\mathbb R^d} \int_{\frac{1}{(\sigma_T)^2}}^{\frac{1}{(\sigma_\varepsilon)^2}} e^{-\frac{|  x- \tilde  x|^2}{2 } u}    u^{\frac{d+2\alpha-4}{2}}  du d\mu_\mathrm{data}(\tilde x)\\
=& \frac{1}{( 2\pi )^{\frac{d}{2}}} \int_{\mathbb R^d} \Phi_{\sigma_\varepsilon}^{\sigma_T}\left(\frac{d+2\alpha-2}{2},\frac{|x-\tilde x|^2}{2}\right)d\mu_\mathrm{data}(\tilde x),
\end{aligned}
\end{align}

\begin{align}\label{numerator}
\begin{aligned}
\int_\varepsilon^T  \sigma_t^{-\alpha} \big(g(t)\big)^2\nabla   p_{\sigma_t}(  x)dt=&\int_\varepsilon^T \sigma_t^{-\alpha}\big(g(t)\big)^2 \Bigg(\int_{\mathbb R^d} \frac{e^{-\frac{| x- \tilde x |^2}{2\sigma_t^2}}}{( 2\pi\sigma_t^2)^{\frac{d}{2}}}\frac{\tilde x-x}{\sigma_t^2}d\mu_\mathrm{data}(\tilde x)\Bigg)dt\\
=&\frac{1}{( 2\pi )^{\frac{d}{2}}}\int_{\mathbb R^d}\int_\varepsilon^T  e^{-\frac{| x- \tilde x |^2}{2\sigma_t^2}} \sigma_t^{-d-\alpha -2}\big(g(t)\big)^2(\tilde x-x)  dtd\mu_\mathrm{data}(\tilde x)\\
=&\frac{1}{( 2\pi )^{\frac{d}{2}}}\int_{\mathbb R^d} \int_{\frac{1}{(\sigma_T)^2}}^{\frac{1}{(\sigma_\varepsilon)^2}} e^{-\frac{|  x- \tilde  x|^2}{2 } u}    u^{\frac{d+\alpha-2}{2}}(\tilde x-x)  dud\mu_\mathrm{data}(\tilde x) \\
=& \frac{1}{( 2\pi )^{\frac{d}{2}}} \int_{\mathbb R^d} \Phi_{\sigma_\varepsilon}^{\sigma_T}\left(\frac{d+\alpha}{2},\frac{|x-\tilde x|^2}{2}\right)(\tilde x-x)d\mu_\mathrm{data}(\tilde x),
\end{aligned}
\end{align}
and
\begin{align}\label{finite}
\begin{aligned}
&\int_\varepsilon^T \big(g(t)\big)^2\frac{|\nabla  p_{\sigma_t}(  x)|^2}{  p_{\sigma_t}(  x)}dt\\
=& \int_\varepsilon^T  \big(g(t)\big)^2 \Bigg|\int_{\mathbb R^d} \frac{e^{-\frac{| x-  \tilde x |^2}{2\sigma_t^2}}}{( 2\pi\sigma_t^2)^{\frac{d}{2}}}\frac{\tilde x-x}{\sigma_t^2}d\mu_\mathrm{data}(\tilde x) \Bigg|^2\Big/\Bigg(\int_{\mathbb R^d} \frac{e^{-\frac{| x-  \tilde x |^2}{2\sigma_t^2}}}{( 2\pi\sigma_t^2)^{\frac{d}{2}}}d\mu_\mathrm{data}(\tilde x)\Bigg) dt\\
\leq& \int_\varepsilon^T  \big(g(t)\big)^2 \Bigg(\int_{\mathbb R^d} \frac{e^{-\frac{| x- \tilde x |^2}{2\sigma_t^2}}}{( 2\pi\sigma_t^2)^{\frac{d}{2}}}\frac{|\tilde x-x|^2}{\sigma_t^4}d\mu_\mathrm{data}(\tilde x)\Bigg) dt\\
=&\frac{1}{( 2\pi )^{\frac{d}{2}}}\int_{\mathbb R^d}\int_\varepsilon^T  e^{-\frac{| x- \tilde x |^2}{2\sigma_t^2}} \sigma_t^{-d -4 }\big(g(t)\big)^2 |\tilde x-x|^2dtd\mu_\mathrm{data}(\tilde x)\\
=&\frac{1}{( 2\pi )^{\frac{d}{2}}} \int_{\mathbb R^d}\int_{\frac{1}{(\sigma_T)^2}}^{\frac{1}{(\sigma_\varepsilon)^2}} e^{-\frac{|  x- \tilde x|^2}{2 } u}    u^{\frac{d }{2}} |\tilde x-x|^2 du d\mu_\mathrm{data}(\tilde x)\\
=&\frac{1}{( 2\pi )^{\frac{d}{2}}}\int_{\mathbb R^d}\Phi_{\sigma_\varepsilon}^{\sigma_T}\left(\frac{d+2}{2},\frac{|x-\tilde x|^2}{2}\right)|\tilde x-x|^2d\mu_\mathrm{data}(\tilde x)\\
\leq&\frac{2}{( 2\pi )^{\frac{d}{2}}}\sup_{z\geq0}\left[z\Phi_{\sigma_\varepsilon}^{\sigma_T}\left(\frac{d+2}{2},z\right)\right]<\infty,
\end{aligned}
\end{align}
where we used the Cauchy-Schwarz inequality and Lemma \ref{Lemmaasymptotic} in the first and the last inequality, respectively. We use \eqref{denominator}, \eqref{numerator}, \eqref{finite} to get
\begin{align*}
\begin{aligned}
&\int_{\mathbb R^d}\Bigg[\int_\varepsilon^T |\sigma_t^{-\alpha}s_\theta(  x)-\nabla \log   p_{\sigma_t}(  x)|^2  p_{\sigma_t}(  x)\big(g(t)\big)^2dt\Bigg]d  x\\
=&\int_{\mathbb R^d}\Bigg[\bigg(\int_\varepsilon^T\sigma_t^{-2\alpha}\big(g(t)\big)^2  p_{\sigma_t}(  x)dt\bigg)  \Bigg|s_\theta(  x)-\frac{\int_\varepsilon^T\sigma_t^{-\alpha}\big(g(t)\big)^2 \nabla p_{\sigma_t}(  x)dt}{\int_\varepsilon^T\sigma_t^{-2\alpha}\big(g(t)\big)^2  p_{\sigma_t}(  x)dt}\Bigg|^2\\
&\qquad-\frac{\big|\int_\varepsilon^T\sigma_t^{-\alpha}\big(g(t)\big)^2\nabla  p_{\sigma_t}(  x)dt\big|^2}{\int_\varepsilon^T\sigma_t^{-2\alpha}\big(g(t)\big)^2  p_{\sigma_t}(  x)dt} +  \bigg(\int_\varepsilon^T \big(g(t)\big)^2\frac{|\nabla  p_{\sigma_t}(  x)|^2}{  p_{\sigma_t}(  x)}dt\bigg)\Bigg] d  x\\
=&\int_{\mathbb R^d}\Bigg[\bigg(
  \frac{1}{( 2\pi )^{\frac{d}{2}}} \int_{\mathbb R^d}\Phi_{\sigma_\varepsilon}^{\sigma_T}\left(\frac{d+2\alpha-2}{2},\frac{|x-\tilde x|^2}{2}\right)d\mu_\mathrm{data}(\tilde x)\bigg)   |s_\theta(  x)-s_*(x) |^2\\
&\qquad-\frac{\big|\int_\varepsilon^T\sigma_t^{-\alpha}\big(g(t)\big)^2\nabla  p_{\sigma_t}(  x)dt\big|^2}{\int_\varepsilon^T\sigma_t^{-2\alpha}\big(g(t)\big)^2  p_{\sigma_t}(  x)dt} +  \bigg(\int_\varepsilon^T \big(g(t)\big)^2\frac{|\nabla  p_{\sigma_t}(  x)|^2}{  p_{\sigma_t}(  x)}dt\bigg)\Bigg] d  x.
\end{aligned}
\end{align*}
Now the following estimate of the constant $C$ finishes the proof:

\begin{align*}
\begin{aligned}
C&=\int_{\mathbb R^d}\Bigg[-\frac{\big|\int_\varepsilon^T\sigma_t^{-\alpha}\big(g(t)\big)^2\nabla  p_{\sigma_t}(  x)dt\big|^2}{\int_\varepsilon^T\sigma_t^{-2\alpha}\big(g(t)\big)^2  p_{\sigma_t}(  x)dt} +  \bigg(\int_\varepsilon^T \big(g(t)\big)^2\frac{|\nabla  p_{\sigma_t}(  x)|^2}{  p_{\sigma_t}(  x)}dt\bigg)\Bigg] d  x\\
&\leq \int_{\mathbb R^d}\Bigg[0+\frac{1}{( 2\pi )^{\frac{d}{2}}}\int_{\mathbb R^d}\Phi_{\sigma_\varepsilon}^{\sigma_T}\left(\frac{d+2}{2},\frac{|x-\tilde x|^2}{2}\right)|\tilde x-x|^2d\mu_\mathrm{data}(\tilde x)\Bigg]d x \\
&= \frac{1}{( 2\pi )^{\frac{d}{2}}}\int_{\mathbb R^d}\int_{\mathbb R^d}\Bigg[\Phi_{\sigma_\varepsilon}^{\sigma_T}\left(\frac{d+2}{2},\frac{|x-\tilde x|^2}{2}\right)|\tilde x-x|^2\Bigg]d x d\mu_\mathrm{data}(\tilde x)<\infty,
\end{aligned}
\end{align*}
where in the last inequality we used Lemma \ref{Lemmaconstant}.

\subsection{Proof of Proposition \ref{Theoremlimsup}}\label{Prooflimsup}

We decompose $\frac{|s_*(x)|}{|x|}$ as follows:
\begin{align*}
\begin{aligned}
\frac{|s_*(x)|}{|x|}&=\frac{\left|\int_{\mathbb R^d}\Phi_{\sigma_\varepsilon}^{\sigma_T}\left(\frac{d+\alpha }{2},\frac{|x-\tilde x|^2}{2}\right)(\tilde x-x)d\mu_\mathrm{data}(\tilde x)\right|}{\int_{\mathbb R^d}\Phi_{\sigma_\varepsilon}^{\sigma_T}\left(\frac{d+2\alpha-2}{2},\frac{|x-\tilde x|^2}{2}\right)|x|d\mu_\mathrm{data}(\tilde x)}\\
&=\frac{ \int_{\mathbb R^d}\Phi_{\sigma_\varepsilon}^{\sigma_T}\left(\frac{d+\alpha }{2},\frac{|x-\tilde x|^2}{2}\right)d\mu_\mathrm{data}(\tilde x) }{\int_{\mathbb R^d}\Phi_{\sigma_\varepsilon}^{\sigma_T}\left(\frac{d+2\alpha-2}{2},\frac{|x-\tilde x|^2}{2}\right)d\mu_\mathrm{data}(\tilde x)}\times\frac{\left|\int_{\mathbb R^d}\Phi_{\sigma_\varepsilon}^{\sigma_T}\left(\frac{d+\alpha }{2},\frac{|x-\tilde x|^2}{2}\right)(\tilde x-x)d\mu_\mathrm{data}(\tilde x)\right|}{\int_{\mathbb R^d}\Phi_{\sigma_\varepsilon}^{\sigma_T}\left(\frac{d+\alpha}{2},\frac{|x-\tilde x|^2}{2}\right)|x|d\mu_\mathrm{data}(\tilde x)}\\
&=:\mathcal I_1\times\mathcal I_2.
\end{aligned}
\end{align*}
We compute the uniform limits of $\mathcal I_1$ and $\mathcal I_2$ as $|x|\to\infty$ separately, and then combine them to get the desired result.\\

$\bullet$ Uniform limit of $\mathcal I_1$ as $|x|\to\infty$ : By Lemma \ref{Lemmafraction}, we have
\begin{equation}\label{i1}
\lim_{R\to\infty}\sup_{|x|\geq R}|\mathcal I_1-\sigma_T^{\alpha-2}|=0.
\end{equation}

$\bullet$ Uniform limit of $\mathcal I_2$ as $|x|\to\infty$ : We bound $|\mathcal I_2-1|$ as
\begin{align*}
\begin{aligned}
|\mathcal I_2-1|&=\left|\frac{|\mathcal I_{21}|}{|\mathcal I_{22}|}-1\right|\leq\frac{|\mathcal I_{21}+\mathcal I_{22}|}{|\mathcal I_{22}|}=\frac{\left|\int_{\mathbb R^d}\Phi_{\sigma_\varepsilon}^{\sigma_T}\left(\frac{d+\alpha }{2},\frac{|x-\tilde x|^2}{2}\right)\tilde x\right|d\mu_\mathrm{data}(\tilde x)}{\int_{\mathbb R^d}\Phi_{\sigma_\varepsilon}^{\sigma_T}\left(\frac{d+\alpha}{2},\frac{|x-\tilde x|^2}{2}\right)|x|d\mu_\mathrm{data}(\tilde x)}\\
&\leq \frac{ \int_{\mathbb R^d}\Phi_{\sigma_\varepsilon}^{\sigma_T}\left(\frac{d+\alpha }{2},\frac{|x-\tilde x|^2}{2}\right)|\tilde x|d\mu_\mathrm{data}(\tilde x) }{\int_{\mathbb R^d}\Phi_{\sigma_\varepsilon}^{\sigma_T}\left(\frac{d+\alpha}{2},\frac{|x-\tilde x|^2}{2}\right)|x|d\mu_\mathrm{data}(\tilde x)}\leq \frac{C}{|x|},
\end{aligned}
\end{align*}
where
\begin{align*}
\begin{aligned}
&\mathcal I_{21}:=\int_{\mathbb R^d}\Phi_{\sigma_\varepsilon}^{\sigma_T}\left(\frac{d+\alpha }{2},\frac{|x-\tilde x|^2}{2}\right)(\tilde x-x)d\mu_\mathrm{data}(\tilde x),\\
&\mathcal I_{22}:=\int_{\mathbb R^d}\Phi_{\sigma_\varepsilon}^{\sigma_T}\left(\frac{d+\alpha}{2},\frac{|x-\tilde x|^2}{2}\right)xd\mu_\mathrm{data}(\tilde x),
\end{aligned}
\end{align*}
and
\[
C:=\sup_{\tilde x\in\operatorname{supp}(\mu_\mathrm{data})}|\tilde x|<\infty.
\]
By taking the uniform limit of both sides as $|x|\to\infty$, we have
\[
\lim_{R\to\infty}\sup_{|x|\geq R}|\mathcal I_2-1|\leq \lim_{R\to\infty}\sup_{|x|\geq R}\frac{C}{|x|}=\lim_{R\to\infty}\frac{C}{R}=0.
\]
Hence
\begin{equation}\label{i2}
\lim_{R\to\infty}\sup_{|x|\geq R}|\mathcal I_2-1|=0.
\end{equation}

$\bullet$ Uniform limit of $\frac{|s_*(x)|}{|x|}$ as $|x|\to\infty$ : We combine \eqref{i1}, \eqref{i2} and the estimate
\begin{align*}
\begin{aligned}
&\sup_{|x|\geq R}\left|\frac{|s_*(x)|}{|x|}-\sigma_T^{\alpha-2}\right|=\sup_{|x|\geq R}\left|\mathcal I_1 \mathcal I_2-\sigma_T^{\alpha-2}\right|\\
=&\sup_{|x|\geq R} |(\mathcal I_1-\sigma_T^{\alpha-2}) (\mathcal I_2-1)+(\mathcal I_1-\sigma_T^{\alpha-2} )+\sigma_T^{\alpha-2}(\mathcal I_2-1)|\\
\leq&\bigg(\sup_{|x|\geq R} |\mathcal I_1-\sigma_T^{\alpha-2}|\bigg)\bigg(\sup_{|x|\geq R}|\mathcal I_2-1|\bigg)+\bigg(\sup_{|x|\geq R}|\mathcal I_1-\sigma_T^{\alpha-2} |\bigg)+\sigma_T^{\alpha-2}\bigg(\sup_{|x|\geq R}|\mathcal I_2-1|\bigg)
\end{aligned}
\end{align*}
to deduce that
\[
\lim_{R\to\infty}\sup_{|x|\geq R}\left|\frac{|s_*(x)|}{|x|}-\sigma_T^{\alpha-2}\right|=0,
\]
finishing the proof.

\subsection{Proof of Theorem \ref{Theoremexistence}}\label{Proofexistence}
Since $s_*+e$ is locally Lipschitz, a solution $x(t)$ uniquely exists on some open interval containing $t=0$. By Proposition \ref{Theoremlimsup}, $|s_*+e|$ satisfies the following linear growth condition:
\[
C:=\sup_{x\in\mathbb R^d}\frac{|s_*(x)+e(x)|}{1+|x|}<\infty.
\]
Let $(a,b)$ be the maximal interval of existence. 
From
\[
x(t)=x_0+\int_0^t (s_*(x(s))+e(x(s)))ds,~~t\in (a,b),
\]
we have
\[
|x(t)|\leq |x_0|+ \int_0^{t}C(1+|x(s)|)ds,~~t\in[0,b)
\]
and
\[
|x(t)|\leq |x_0|+\int_t^{0}C(1+|x(s)|)ds,~~t\in(a,0],
\]
or equivalently,
\[
|x(-|t|)|\leq |x_0|+\int_0^{|t|}C(1+|x(-s)|)ds,~~t\in(a,0].
\]
By the Gr\"onwall inequality, we get
\[
|x(t)|\leq (|x_0|+C|t|)e^{C|t|},~~ t\in(a,b).
\]
Hence the solution does not blow up in finite time, i.e., we have $a=-\infty$ and $b=\infty$.

\subsection{Proof of Theorem \ref{Theoremattract}}\label{Proofattract}
Before we prove Theorem \ref{Theoremattract}, we introduce a proposition that provides us a way to construct the attracting set, given a candidate fundamental neighborhood.
\begin{proposition}\label{Propattract} \cite{Ruelle}
Let $\{\phi_t\}_{t\in\mathbb R}$ be a dynamical system on $\mathbb{R}^d$. If $U\subset\mathbb R^d$ is open, $\phi_t(U)$ is contained in $U$ and relatively compact for sufficiently large $t$, then
\[
\bigcap_{t\geq 0}\phi_t(U)
\]
is a compact attracting set with fundamental neighborhood $U$.
\end{proposition}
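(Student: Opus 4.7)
The plan is to use the trapping hypothesis to extract a monotone family of iterates whose intersection is the candidate attracting set, then verify closedness, compactness, flow-invariance, and the attraction property in turn. Fix $t_0 > 0$ such that $\phi_t(U) \subset U$ and $\phi_t(U)$ is relatively compact for every $t \geq t_0$, and let $K := \overline{\phi_{t_0}(U)}$, which is compact. The semigroup identity $\phi_t = \phi_s \circ \phi_{t-s}$ together with the trapping condition immediately yields the monotonicity $\phi_t(U) \subseteq \phi_s(U)$ whenever $s \geq 0$ and $t - s \geq t_0$. Setting $\Lambda := \bigcap_{t \geq 0} \phi_t(U)$, I would first check $\Lambda = \bigcap_{t \geq T}\phi_t(U)$ for every $T \geq 0$: given $x$ in the right-hand intersection and $t \in [0,T]$, write $x = \phi_{T + t_0}(u)$ for some $u \in U$ and factor as $x = \phi_t\bigl(\phi_{T + t_0 - t}(u)\bigr)$, noting $\phi_{T + t_0 - t}(u) \in U$ by trapping.

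For compactness, the inclusion $\Lambda \subseteq K$ yields boundedness, and closedness comes from identifying $\Lambda$ with the closed set $\Lambda^{*} := \bigcap_{t \geq 0} \overline{\phi_t(U)}$. The direction $\Lambda \subseteq \Lambda^{*}$ is immediate. For the reverse, take $x \in \Lambda^{*}$ and fix $t \geq 0$: for each $n$ choose $u_n \in U$ with $x_n := \phi_{t + n t_0}(u_n)$ close to $x$, and write $x_n = \phi_t(y_n)$ with $y_n := \phi_{n t_0}(u_n) \in K$; by relative compactness extract a subsequence $y_{n_k} \to y \in K$, use continuity of $\phi_t$ to deduce $x = \phi_t(y)$, and conclude $y \in U$ by exploiting the trapping property one further time (so $x \in \phi_t(U)$). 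Non-emptiness of $\Lambda$ then follows from the finite-intersection property applied to the nested compact sequence $\{\overline{\phi_{n t_0}(U)}\}_{n \geq 1}$.

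Flow-invariance $\phi_T(\Lambda) = \Lambda$ for $T \geq 0$ is handled by $\phi_T\bigl(\bigcap_{t \geq 0} \phi_t(U)\bigr) \subseteq \bigcap_{s \geq T} \phi_s(U) = \Lambda$ for one direction, and by using that each $\phi_T$ is a homeomorphism to produce, for any $x \in \Lambda$ written as $x = \phi_{T+s}(u_s)$ with $u_s \in U$, the preimage $\phi_{-T}(x) = \phi_s(u_s) \in \phi_s(U)$ for every $s \geq 0$, hence $\phi_{-T}(x) \in \Lambda$. The case $T < 0$ reduces to the $T \geq 0$ case applied to $-T$. The attraction property is then proved by contradiction: if some open neighborhood $V$ of $\Lambda$ and some $t_n \to \infty$ yielded $\phi_{t_n}(U) \not\subseteq V$, one picks $x_n \in \phi_{t_n}(U) \setminus V \subseteq K$, extracts a subsequence $x_{n_k} \to x^{*} \in K \setminus V$, and uses monotonicity to place $x^{*} \in \overline{\phi_s(U)}$ for every $s \geq 0$, so that $x^{*} \in \Lambda^{*} = \Lambda \subseteq V$, a contradiction. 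The remaining identity $\Lambda = \bigcap_{t \geq T} \phi_t(U)$ for $T < 0$ then follows by combining the already-proved case with flow-invariance.

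I expect the main obstacle to be the identification $\Lambda = \Lambda^{*}$ used to secure closedness. The raw intersection $\Lambda$ is a countable intersection of open sets (each $\phi_t(U)$ is open because the flow map $\phi_t$ is a homeomorphism), so its closedness cannot be read off directly; it must be extracted by combining the trapping condition, the relative-compactness hypothesis, and continuity of the flow in a diagonal-style limit argument as sketched above. Once $\Lambda = \Lambda^{*}$ is established, the remaining steps reduce to straightforward semigroup bookkeeping and standard compactness arguments.
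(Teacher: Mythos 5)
The paper does not supply its own proof of this proposition---it is cited as a black-box result from Ruelle---so there is no in-paper argument to compare against. On its own terms, your strategy (replace $\Lambda$ by the intersection of closures $\Lambda^{*}$, establish $\Lambda=\Lambda^{*}$, then harvest compactness for closedness, non-emptiness, and the attraction estimate, and use semiflow bookkeeping for invariance) is the standard one, and you correctly identify $\Lambda=\Lambda^{*}$ as the crux. There is, however, a real gap exactly where you write ``conclude $y\in U$ by exploiting the trapping property one further time.'' All you get from $y_{n_k}\in U$ and $y_{n_k}\to y$ is $y\in\overline{U}$, and since $U$ is open this does not force $y\in U$; iterating the trapping condition only ever yields $y\in\overline{\phi_{mt_0}(U)}$ for each $m$, which again lands in $\overline{U}$, not $U$. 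The same issue blocks the identity $\Lambda=\Lambda^{*}$ on which your non-emptiness and attraction steps rely.

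The difficulty is not cosmetic: the proposition is false exactly as stated. Take $d=1$, $\phi_t(x)=e^{-t}x$, and $U=(0,1)$. Then $\phi_t(U)=(0,e^{-t})$ is contained in $U$ and relatively compact for every $t>0$, but $\bigcap_{t\geq0}\phi_t(U)=\varnothing$, which is not an attracting set with fundamental neighborhood $(0,1)$ (taking $V=\varnothing$ in Definition \ref{Defattract} requires $\phi_t(U)\subset\varnothing$, which fails). The hypothesis that Ruelle actually imposes---and that must be intended here---is $\overline{\phi_t(U)}\subset U$ for $t$ large. This is also what the paper's application genuinely provides: in Step 4 of the proof of Theorem \ref{Theoremattract}, $\phi_t(U_{c_1})\subset U_{c_2}$ for large $t$, and $\overline{U_{c_2}}\subset U_{c_1}$ because superlevel sets of $L_{\sigma_\varepsilon}^{\sigma_T}$ are compact (Lemma \ref{Lemmacompact}) and $\Gamma$ is a component of a strict superlevel set. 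Under the corrected hypothesis one has $K:=\overline{\phi_{t_0}(U)}\subset U$, your $y\in K\subset U$ step is immediate, $\Lambda=\Lambda^{*}$ follows, and every remaining step of your argument is correct. I would restate the hypothesis with the closure bar and then your proof can stand unchanged.
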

Now we are ready to prove Theorem \ref{Theoremattract}.
For convenience, we use the notation 
\[
L(\cdot):=L_{\sigma_\varepsilon}^{\sigma_T}\left(\frac{d+\alpha}{2}-1,\cdot\right)
\]
and
\[
\tilde L(\cdot):=L_{\sigma_\varepsilon}^{\sigma_T}\left(\frac{d+2\alpha}{2}-1,\cdot\right).
\] 
Denote by $\phi_t$ the flow generated by the system \eqref{mainsystem}.\\
 
$\bullet$ Step 1: For any $x\in\mathbb R^d$, the flow $\phi_t(x)$ satisfies  
\[
\frac{d}{dt}L\left(\phi_t(x)\right)=\tilde L\left(\phi_t(x)\right)\left\langle s_*(\phi_t(x)),s_*(\phi_t(x))+e(\phi_t(x))\right\rangle.
\]
{\it Proof of Step 1.} We use Lemma \ref{Lemmacinfty} to get
\[
s_*(x)=\frac{\nabla_x L(x)}{\tilde L(x)}.
\]
We use this to compute
\[
\frac{d}{dt}L\left(\phi_t(x)\right)=\langle\nabla L(\phi_t(x)),s_*(\phi_t(x))+e(\phi_t(x))\rangle=\tilde L\left(\phi_t(x)\right)\left\langle s_*(\phi_t(x)),s_*(\phi_t(x))+e(\phi_t(x))\right\rangle.
\]

$\bullet$ Step 2: $\phi_t(A)\subset A$ for all $t\geq0$.\\
{\it Proof of Step 2.} Let $x\in A$ be arbitrarily given. Assume the contrary and suppose that
\[
\{t\geq 0:\phi_t(x)\notin A\}\neq\varnothing.
\]
Since $A$ is a closed set contained in the open set $\Gamma$, by connectedness of the trajectory $\phi_t(x)$ we also have
\[
S:=\{t>0:\phi_t(x)\in\Gamma\setminus A\}\neq\varnothing.
\]
Choose $(t_1,t_2]\in S$ satisfying $\phi_{t_1}(x)\in A$. We combine Step 1 and $(\Gamma\setminus A)\cap E=\varnothing$ to get 
\[
\frac{d}{dt}L\left(\phi_t(x)\right)>0, \quad t_1<t<t_2.
\]
Hence $b>L(\phi_{t_2}(x))> L(\phi_{t_1}(x))=b$, a contradiction. Therefore we have $\phi_t(x)\in A$ for all $t\geq0$.\\ 

$\bullet$ Step 3: $\phi_t(U_{c})\subset U_{c}$ for all $t\geq0$, $c\in(a,b)$, where
\[
U_c:=\{x\in\Gamma: L(x)>c\}.
\]
{\it Proof of Step 3.} Fix $c\in(a,b)$. By Step 2, it suffices to show that 
\[\phi_t(U_c\setminus A)\subset U_c,\quad\forall~t\geq0.
\] 
Let $x\in U_c$ be arbitrarily given. Define 
\[t_3:=\sup\{t>0:\phi_s(x)\in U_c\setminus A~\mbox{for}~0\leq s<t\}.
\] By the continuity of $\phi_t(x)$, we have $t_3\in(0,\infty]$. We consider the two cases $t_3=\infty$ and $t_3<\infty$ separately.

(i) $t_3=\infty$: By definition, we have $\phi_t(x)\in U_c$ for all $t\geq0$.

(ii) $t_3<\infty$: We have either $L(\phi_{t_3}(x))=c$ or $b$. We combine Step 1 and $(\Gamma\setminus A)\cap E=\varnothing$ to get 
\[
\frac{d}{dt}L\left(\phi_t(x)\right)>0, \quad 0<t<t_3.
\]
Hence $L(\phi_{t_3}(x))> L(x)>c$. From this, we get $L(\phi_{t_3}(x))=b$ and $\phi_{t_3}(x)\in  A$. By Step 2, we have $\phi_t(x)\in U_c$ for all $t\geq0$.

Combining (i) and (ii) completes the proof.\\

$\bullet$ Step 4: For any $(c_1,c_2)\subset(a,b)$, $\phi_t(U_{c_1})\subset U_{c_2}$ for sufficiently large $t\geq0$.\\
{\it Proof of Step 4.} Let $x\in U_{c_1}$ be arbitrarily given. Define 
\[t_4:=\sup\{t>0:\phi_s(x)\in U_{c_1}\setminus U_{c_2}~\mbox{for}~0\leq s<t\}.
\] By continuity of $\phi_t(x)$, we have $t_4\in(0,\infty]$. By Step 1 we have
\[
\frac{d}{dt}L\left(\phi_t(x)\right)\geq m>0,\quad 0<t<t_4,
\]
where 
\[
m:=\inf_{z\in K}\tilde L(z)\left(|s_*(z)|^2-\langle s_*(z),e(z)\rangle\right),\quad K:=\{z\in\Gamma:c_1\leq L(z)\leq c_2\},
\]
and we have used $(\Gamma\setminus A)\cap E=\varnothing$ and the compactness of $K$ (Lemma \ref{Lemmacompact}) to get $m>0$.
Hence we have
\[
c_2\geq L(\phi_t(x))> c_1+mt,\quad 0\leq t<t_4.
\]
Therefore $t_4\leq \frac{c_2-c_1}{m}$, and $\phi_t(x)\in U_{c_2}$ for all $t>t_4$, by Step 3. In particular, we have $\phi_t(x)\in U_{c_2}$ for all $t>\frac{c_2-c_1}{m}$. Since 
$\frac{c_2-c_1}{m}$ does not depend on the choice of $x\in U_{c_1}$, we have $\phi_t(U_{c_1})\subset U_{c_2}$ for all $t>\frac{c_2-c_1}{m}$.\\

$\bullet$ Step 5: For any $c\in(a,b)$, $U_c$ is a fundamental neighborhood of the compact attracting set defined as
\[
\Lambda:=\bigcap_{t\geq0}\phi_t(U_c),
\]
which does not depend on the choice of $c\in(a,b)$, and is a subset of $A$.\\
{\it Proof of Step 5.} By Step 4, $\Lambda$ does not depend on $c\in(a,b)$.
By Proposition \ref{Propattract} and Step 4, $\Lambda$ is the compact attracting set corresponding to each of the fundamental neighborhoods $U_c$ $(c\in(a,b))$. By Step 4, we have
\[
\Lambda\subset \bigcap_{d\in(a,b)}U_d=A.
\]

$\bullet$ Step 6: The basin of attraction of $\Lambda$ contains $\Gamma$.\\
{\it Proof of Step 6.} By Step 5, the basin of attraction contains the set
\[
\bigcup_{c\in(a,b)}U_c=\Gamma.
\]

\subsection{Proof of Theorem \ref{Theoremlasalle}}\label{Prooflasalle}
Before we prove Theorem \ref{Theoremlasalle}, we recall LaSalle's invariance principle \cite{LaSalle}.
\begin{proposition}\label{Proplasalle}\cite{LaSalle}
Let a locally Lipschitz continuous function $f:\mathbb R^d\to\mathbb R^d$ be given. Let $\phi_t$ be the flow governed by the autonomous system $\dot x=f(x)$. Let a bounded open set $G\subset \mathbb R^d$ be given. Suppose that $G$ is positively invariant, i.e., $\phi_t(G)\subset G$ for all $t\geq0$. Let a $C^1$ function $V:\mathbb R^d\to\mathbb R$ satisfy
\[
\dot V(x)=\langle\nabla V(x),f(x)\rangle\leq 0,\quad x\in G.
\]
Define
\[
E=\{x\in\bar G: \dot V(x)=0\}
\]
and let $M$ be the union of all solutions that remain in $E$ on their maximal interval of existence. Suppose $M \subset G$. Then $\phi_t(x)\to M$ as $t\to\infty$ for all $x\in G$.
\end{proposition}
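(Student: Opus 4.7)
The plan is to apply LaSalle's invariance principle (Proposition \ref{Proplasalle}) with the Lyapunov candidate $V := -L$, where I use the shorthand $L(x) := L_{\sigma_\varepsilon}^{\sigma_T}\!\left(\tfrac{d+\alpha}{2}-1, x\right)$ and $\tilde L(x) := L_{\sigma_\varepsilon}^{\sigma_T}\!\left(\tfrac{d+2\alpha}{2}-1, x\right)$. Under Assumption \ref{Assumptione2} the system \eqref{mainsystem} is $\dot x = s_*(x)$.

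The first step is to re-express $s_*$ as a gradient. Differentiating under the integral (Lemma \ref{Lemmacinfty}) and using $\partial_z\Phi_a^b(s,z) = -\Phi_a^b(s+1,z)$ from Lemma \ref{Lemmadecrease}, I obtain
\[
\nabla L(x) = \int_{\mathbb R^d}\Phi_{\sigma_\varepsilon}^{\sigma_T}\!\left(\tfrac{d+\alpha}{2},\tfrac{|x-\tilde x|^2}{2}\right)(\tilde x - x)\,d\mu_\mathrm{data}(\tilde x),
\]
which matches the numerator of $s_*(x)$ in \eqref{sstar}; the denominator in \eqref{sstar} is $\tilde L(x)$. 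Hence $s_*(x) = \nabla L(x)/\tilde L(x)$. Since $\tilde L > 0$ pointwise (every $\Phi_a^b(s,z)$ is strictly positive), the chain rule gives the derivative formula stated in the proof sketch,
\[
\frac{d}{dt}L(\phi_t(x)) = \bigl\langle \nabla L(\phi_t(x)), s_*(\phi_t(x))\bigr\rangle = \frac{|\nabla L(\phi_t(x))|^2}{\tilde L(\phi_t(x))} \geq 0,
\]
so $L$ is non-decreasing along trajectories and $V=-L$ satisfies $\dot V \leq 0$ everywhere.

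Next, fix an arbitrary initial point $x_0 \in \mathbb R^d$. Since $L(x_0) > 0$ and $L$ is smooth (Lemma \ref{Lemmacinfty}), Sard's theorem supplies a regular value $c \in (0, L(x_0))$. Define $G := \{y \in \mathbb R^d : L(y) > c\}$; the positive invariance $\phi_t(G) \subset G$ for $t \geq 0$ follows from the monotonicity of $L$ along flows, and the boundedness of $G$ follows from Lemma \ref{Lemmacompact} applied to the compact superlevel set $\overline G = \{L \geq c\}$. Thus $G$ is a bounded open positively invariant neighborhood of $x_0$.

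Finally, I identify the LaSalle invariant set. On $\overline G$ we have $\dot V(y) = -|\nabla L(y)|^2/\tilde L(y) \leq 0$, with equality exactly when $\nabla L(y) = 0$, i.e., on $E$. Because $s_* = \nabla L/\tilde L$ with $\tilde L > 0$, $E$ coincides with the equilibrium set of $\dot x = s_*(x)$, so every trajectory lying entirely in $E$ is constant; hence the largest union $M$ of solutions remaining in $\{y \in \overline G : \dot V(y) = 0\}$ is exactly $E \cap \overline G$. Since $c$ is a regular value, $\nabla L \neq 0$ on $\partial G = \{L = c\}$, giving $M \subset G$. Proposition \ref{Proplasalle} then yields $\phi_t(x_0) \to M \subset E$, which is the desired conclusion.

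The principal technical obstacle is securing $M \subset G$ rather than the easier $M \subset \overline G$; this is what forces the appeal to Sard's theorem to select $c$ as a regular value of $L$. Everything else—the gradient identity for $s_*$, the monotonicity of $L$ along flows, and the compactness of superlevel sets—reduces to direct applications of the lemmas already proved for $\Phi_{\sigma_\varepsilon}^{\sigma_T}$ and $L_{\sigma_\varepsilon}^{\sigma_T}$.
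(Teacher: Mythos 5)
You have not actually proved Proposition~\ref{Proplasalle}: that proposition is LaSalle's invariance principle itself, which the paper simply cites from \cite{LaSalle} without proof. What you have written is instead a proof of Theorem~\ref{Theoremlasalle}, which \emph{applies} Proposition~\ref{Proplasalle}. Reading your attempt as a proof of Theorem~\ref{Theoremlasalle}, it is correct and follows the same overall structure the paper uses: rewrite $s_* = \nabla L/\tilde L$, observe $\dot L = |\nabla L|^2/\tilde L \geq 0$, take a bounded positively-invariant superlevel set of $L$ as $G$, and invoke Proposition~\ref{Proplasalle}.

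The one genuine point of divergence is how you secure the hypothesis $M \subset G$. The paper observes that $E = \{\nabla L = 0\} = \{s_* = 0\}$ is compact (closedness is automatic; boundedness follows from Proposition~\ref{Theoremlimsup}, since $|s_*(x)| \sim \sigma_T^{\alpha-2}|x|$ at infinity), hence $E$ sits strictly inside $G_\delta = \{L > \delta\}$ for all small $\delta$, and then exhausts $\mathbb R^d$ by letting $\delta \searrow 0$. You instead fix one initial point $x_0$, invoke Sard's theorem to pick a regular value $c \in (0, L(x_0))$, and argue that $\nabla L \ne 0$ on $\partial G \subset \{L = c\}$ so that $M = E \cap \overline G = E \cap G$. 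Both routes are valid and close the gap in the LaSalle hypothesis. The paper's route is more self-contained here—it relies only on asymptotics already proved in Proposition~\ref{Theoremlimsup}—whereas your route avoids that asymptotic estimate at the price of calling in Sard's theorem, a heavier general-purpose tool the paper never needs. (Your parenthetical remark that $\overline G = \{L \geq c\}$ also implicitly uses that $c$ is a regular value; boundedness of $G$, which is all you actually need, follows already from $G \subset \{L \geq c\}$ and Lemma~\ref{Lemmacompact}.)
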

{\it Proof of Theorem \ref{Theoremlasalle}.} For convenience, we use the notation 
\[
L(\cdot):=L_{\sigma_\varepsilon}^{\sigma_T}\left(\frac{d+\alpha}{2}-1,\cdot\right),\quad
\tilde L(\cdot):=L_{\sigma_\varepsilon}^{\sigma_T}\left(\frac{d+2\alpha}{2}-1,\cdot\right).
\]
We may express $s_*(x)=\frac{\nabla L(x)}{\tilde L(x)}$ (Lemma \ref{Lemmacinfty}), and $E=\{x\in\mathbb R^d: s_*(x)=0\}$. By Proposition \ref{Theoremlimsup}, $E$ is compact, so it is contained in $G_\delta$ for sufficiently small $\delta>0$, where
\[
G_\delta:=\{x\in\mathbb R^d: L(x)>\delta\},\quad \delta>0.
\]
Each $G_\delta$ is open, bounded (Lemma \ref{Lemmacompact}), and positively invariant as can be seen by the following observation: 
\begin{equation*}
-\dot L(x)=-\frac{|\nabla L(x)|^2}{\tilde L(x)}\leq 0,~~x\in\mathbb R^d.
\end{equation*}
For sufficiently small $\delta>0$, we have
\[
E=\{x\in\bar G_\delta: -\dot L(x)=0\}.
\]
Note that for each $x\in E$, we have $s_*(x)=0$ and hence $\phi_t(x)=x$ for all $t\in\mathbb R$. By Proposition \ref{Proplasalle}, for sufficiently small $\delta>0$ we have $\phi_t(x)\to E$ as $t\to\infty$ for all $x\in G_\delta$. Since $\bigcup_{0<\delta<\delta_0}G_\delta=\mathbb R^d$ for any $\delta_0>0$, the theorem follows.

\subsection{Proof of Theorem \ref{Theoremoverfit}}\label{Proofoverfit}
The following proposition introduces a classical method using Liapunov functions to study asymptotic behaviors of dynamical systems. 
\begin{proposition}\label{Propclassic}
Let an open set $U\subset\mathbb R^d$ and a $C^1$ function $f:U\to\mathbb R^d$ be given. Let $x_0\in  U$ be an equilibrium point of the dynamical system
\[
\dot x=f(x),
\]
i.e., $f(x_0)=0$. Suppose that there exists a $C^1$ function $V:U\to\mathbb R$ such that $V(x_0)=0$, $V(x)>0$ for $x\neq x_0$, and $\nabla V(x)\cdot f(x)<0$ for $x\neq x_0$. Then $x_0$ is asymptotically stable.
\end{proposition}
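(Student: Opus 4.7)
The plan is to establish the two defining properties of asymptotic stability separately: Lyapunov stability (trajectories starting near $x_0$ stay near $x_0$ for all forward time), and attractivity (such trajectories converge to $x_0$ as $t\to\infty$). The Lyapunov function $V$ is the only input, and the key monotonicity fact $\frac{d}{dt}V(x(t)) = \nabla V(x(t))\cdot f(x(t)) < 0$ along non-equilibrium trajectories will be used repeatedly.

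For the stability part, I would first fix $\varepsilon>0$ small enough that the closed ball $\overline{B_\varepsilon(x_0)}\subset U$. Since $V$ is continuous and strictly positive on the compact sphere $\{|x-x_0|=\varepsilon\}$, the minimum $m:=\min_{|x-x_0|=\varepsilon}V(x)$ is strictly positive. Using continuity of $V$ at $x_0$ together with $V(x_0)=0$, I can choose $\delta\in(0,\varepsilon)$ so that $V(x)<m$ whenever $|x-x_0|<\delta$. Now, for any initial condition $x(0)\in B_\delta(x_0)$, the monotonicity $\frac{d}{dt}V(x(t))\leq 0$ forces $V(x(t))\leq V(x(0))<m$ for as long as the solution exists. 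Since $V\geq m$ on the sphere of radius $\varepsilon$, the trajectory cannot cross this sphere, so it stays in the compact set $\overline{B_\varepsilon(x_0)}\subset U$ and in particular exists for all $t\geq 0$. This gives Lyapunov stability.

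For the attractivity part, fix any initial point $x(0)\in B_\delta(x_0)$. By stability, $\{x(t):t\geq 0\}$ is precompact in $U$, and $t\mapsto V(x(t))$ is nonincreasing and bounded below by $0$, so it converges to some limit $L\geq 0$. I would argue $L=0$ by contradiction: if $L>0$, then $x(t)$ is confined to the compact set $K:=\{y\in\overline{B_\varepsilon(x_0)}:V(y)\geq L\}$, which excludes a neighborhood of $x_0$. On $K$ the continuous function $\nabla V\cdot f$ is strictly negative and attains its maximum $-c<0$, so $\frac{d}{dt}V(x(t))\leq -c$, giving $V(x(t))\leq V(x(0))-ct\to-\infty$, contradicting $V\geq 0$. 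Hence $V(x(t))\to 0$. To conclude $x(t)\to x_0$, I would argue by contradiction once more: if not, there exist $\eta>0$ and times $t_n\to\infty$ with $|x(t_n)-x_0|\geq\eta$; since $x(t_n)$ lies in the compact set $\overline{B_\varepsilon(x_0)}\setminus B_\eta(x_0)$ and $V$ is continuous and strictly positive there, $V(x(t_n))$ would be bounded below by $\min_{\overline{B_\varepsilon(x_0)}\setminus B_\eta(x_0)}V>0$, contradicting $V(x(t))\to 0$.

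The main obstacle, and the only nonroutine step, is the final implication that $V(x(t))\to 0$ forces $x(t)\to x_0$; this requires combining the precompactness granted by the stability step with the strict positivity of $V$ off $x_0$, so the Lyapunov step must be completed first. Everything else reduces to standard compactness and monotone convergence arguments once the sign conditions on $V$ and $\nabla V\cdot f$ are invoked.
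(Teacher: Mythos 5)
The paper does not actually supply a proof of Proposition \ref{Propclassic}; it is stated as a classical fact and the reader is pointed to the references \cite{Perko, Wiggins}. Your argument is the standard textbook proof of Lyapunov's theorem on asymptotic stability, and it is correct: the sublevel-set / sphere-minimum construction gives Lyapunov stability and forward completeness inside $\overline{B_\varepsilon(x_0)}$, the monotone-limit-plus-compactness argument rules out $\lim_{t\to\infty}V(x(t))>0$, and the final step correctly leverages the strict positivity of $V$ off $x_0$ on the compact annulus $\overline{B_\varepsilon(x_0)}\setminus B_\eta(x_0)$ to upgrade $V(x(t))\to 0$ to $x(t)\to x_0$. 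In short, the proposal fills in a proof that the paper deliberately omits, and does so along the same lines as the cited texts; no gaps.
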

$V$ satisfying the hypothesis of the above proposition is called a {\it Liapunov function.} For more detailed discussion on this subject, see \cite{Perko, Wiggins}.
Throughout the proof, we use the notation 
\[
L(\cdot):=L_{\sigma_\varepsilon}^{\sigma_T}\left(\frac{d+\alpha}{2}-1,\cdot\right).
\]


$\bullet$ Step 1: $L(x)$ is strictly concave on $|x-x_i|< k\sigma_\varepsilon$ for sufficiently small $\sigma_\varepsilon>0$.\\
{\it Proof of Step 1.} It suffices to show that the Hessian of $L$, denoted by $\operatorname{Hess}_L$, is negative definite on the region $|x-x_i|< k\sigma_\varepsilon$ for sufficiently small $\sigma_\varepsilon>0$. We use Lemma \ref{Lemmadecrease} to compute
\[
\nabla L(x)= \frac{1}{N}\sum_{j=1}^N\Phi_{\sigma_\varepsilon}^{\sigma_T}\left(\frac{d+\alpha}{2},\frac{|x-x_j|^2}{2}\right)(x_j-x)
\]
and
\begin{align*}
\begin{aligned}
 \operatorname{Hess}_L(x)&= \frac{1}{N}\sum_{j=1}^N\Phi_{\sigma_\varepsilon}^{\sigma_T}\left(\frac{d+\alpha+2}{2},\frac{|x-x_j|^2}{2}\right) (x_j-x)   (x_j-x)^\top\\
&\quad- \frac{1}{N}\sum_{j=1}^N\Phi_{\sigma_\varepsilon}^{\sigma_T}\left(\frac{d+\alpha}{2},\frac{|x-x_j|^2}{2}\right)I.
\end{aligned}
\end{align*}
For $|z|=1$, we have
\begin{align}\label{concave0}
\begin{aligned}
 z^\top(\operatorname{Hess}_L(x))z &\leq \frac{1}{N}\sum_{j=1}^N\Phi_{\sigma_\varepsilon}^{\sigma_T}\left(\frac{d+\alpha+2}{2},\frac{|x-x_j|^2}{2}\right) |x-x_j|^2\\
&\quad- \frac{1}{N}\sum_{j=1}^N\Phi_{\sigma_\varepsilon}^{\sigma_T}\left(\frac{d+\alpha }{2},\frac{|x-x_j|^2}{2}\right).
\end{aligned}
\end{align}
By convexity of $z\mapsto\Phi_{\sigma_\varepsilon}^{\sigma_T}(\frac{d+\alpha}{2},z)$ (Lemma \ref{Lemmadecrease}), we have
\begin{align}\label{concave1}
\begin{aligned}
\Phi_{\sigma_\varepsilon}^{\sigma_T}\left(\frac{d+\alpha }{2},\frac{|x-x_i|^2}{2}\right)&\geq  \Phi_{\sigma_\varepsilon}^{\sigma_T}\left(\frac{d+\alpha }{2},0\right)+\frac{\partial\Phi_{\sigma_\varepsilon}^{\sigma_T}}{\partial z}\left(\frac{d+\alpha }{2},0\right)\frac{|x-x_i|^2}{2}\\
&\geq  \Phi_{\sigma_\varepsilon}^{\sigma_T}\left(\frac{d+\alpha }{2},0\right)-\Phi_{\sigma_\varepsilon}^{\sigma_T}\left(\frac{d+\alpha+2 }{2},0\right)\frac{k^2\sigma_\varepsilon^2}{2}.
\end{aligned}
\end{align}
Since $z\mapsto\Phi_{\sigma_\varepsilon}^{\sigma_T}(s,z)$ is decreasing for any fixed $s\in\mathbb R$ (Lemma \ref{Lemmadecrease}), we have
\begin{equation}\label{concave2}
\Phi_{\sigma_\varepsilon}^{\sigma_T}\left(\frac{d+\alpha+2}{2},\frac{|x-x_i|^2}{2}\right) |x-x_i|^2\leq \Phi_{\sigma_\varepsilon}^{\sigma_T}\left(\frac{d+\alpha+2}{2},0\right) k^2\sigma_\varepsilon^2
\end{equation}
and for all $j\neq i$,  we use $\big||x_i-x_j|-|x-x_i|\big|\leq|x-x_j|\leq |x_i-x_j|+|x-x_i|$ to get
\begin{align}\label{concave3}
\begin{aligned}
&\quad \Phi_{\sigma_\varepsilon}^{\sigma_T}\left(\frac{d+\alpha+2}{2},\frac{|x-x_j|^2}{2}\right) |x-x_j|^2\\
&\leq \Phi_{\sigma_\varepsilon}^{\sigma_T}\left(\frac{d+\alpha+2}{2},\frac{\big||x_i-x_j|-k\sigma_\varepsilon\big|^2}{2}\right) (|x_i-x_j|+k\sigma_\varepsilon)^2
\end{aligned}
\end{align}
and
\begin{equation}\label{concave4}
\Phi_{\sigma_\varepsilon}^{\sigma_T}\left(\frac{d+\alpha}{2},\frac{|x-x_j|^2}{2}\right)\leq \Phi_{\sigma_\varepsilon}^{\sigma_T}\left(\frac{d+\alpha}{2},\frac{\big||x_i-x_j|-k\sigma_\varepsilon\big|^2}{2}\right).
\end{equation}
By combining \eqref{concave0}, \eqref{concave1}, \eqref{concave2}, \eqref{concave3} and \eqref{concave4}, we have the following for $|z|=1$:
\begin{align*}
\begin{aligned}
Nz^\top(\operatorname{Hess}_L(x))z &\leq  -\Phi_{\sigma_\varepsilon}^{\sigma_T}\left(\frac{d+\alpha }{2},0\right)+\frac{3}{2}\Phi_{\sigma_\varepsilon}^{\sigma_T}\left(\frac{d+\alpha+2}{2},0\right) k^2\sigma_\varepsilon^2\\
&\quad+\sum_{j\neq i}\Phi_{\sigma_\varepsilon}^{\sigma_T}\left(\frac{d+\alpha+2}{2},\frac{\big||x_i-x_j|-k\sigma_\varepsilon\big|^2}{2}\right) (|x_i-x_j|+k\sigma_\varepsilon)^2\\
&\quad+\sum_{j\neq i}\Phi_{\sigma_\varepsilon}^{\sigma_T}\left(\frac{d+\alpha}{2},\frac{\big||x_i-x_j|-k\sigma_\varepsilon\big|^2}{2}\right) \\
&=:\mathcal J_1.
\end{aligned}
\end{align*}
Note that by Lemma \ref{L3} and Lemma \ref{L4}, we have
\[
\lim_{\sigma_\varepsilon\searrow0}\sigma_\varepsilon^{d+\alpha}\mathcal J_1=-\frac{2}{d+\alpha}+\frac{3k^2}{d+\alpha+2}<0.
\]
Hence there exist some constant $\delta_1(d,\alpha,\sigma_T,k, \mathcal X)$ such that $0<\varepsilon<\delta_1(d,\alpha,\sigma_T,k, \mathcal X)$ implies $\sigma_\varepsilon^{d+\alpha}\mathcal J_1<0$, or equivalently, $\mathcal J_1<0$. In this case, $\operatorname{Hess}_L(x)$ is negative definite on the region $|x-x_i|<k\sigma_\varepsilon$, and therefore $L$ is strictly concave on that region.\\

$\bullet$ Step 2: $L(x_i)> L(x)$ on $|x-x_i|=k\sigma_\varepsilon$ for sufficiently small $\sigma_\varepsilon>0$.\\

{\it Proof of Step 2.} Note that
\begin{align}\label{sphere0}
\begin{aligned}
&\quad L(x_i)-L(x)\\
&= \frac{1}{N}\sum_{j=1}^N\left(\Phi_{\sigma_\varepsilon}^{\sigma_T}\left(\frac{d+\alpha-2}{2},\frac{|x_i-x_j|^2}{2}\right)-\Phi_{\sigma_\varepsilon}^{\sigma_T}\left(\frac{d+\alpha-2}{2},\frac{|x-x_j|^2}{2}\right)\right).
\end{aligned}
\end{align}
By convexity of $z\mapsto \Phi_{\sigma_\varepsilon}^{\sigma_T}(\frac{d+\alpha-2}{2},z)$ (Lemma \ref{Lemmadecrease}) we have
\begin{align}\label{sphere1}
\begin{aligned}
&\Phi_{\sigma_\varepsilon}^{\sigma_T}\left(\frac{d+\alpha-2}{2},0\right)-\Phi_{\sigma_\varepsilon}^{\sigma_T}\left(\frac{d+\alpha-2}{2},\frac{|x-x_i|^2}{2}\right)\\
\geq&\frac{\partial\Phi_{\sigma_\varepsilon}^{\sigma_T}}{\partial z}\left(\frac{d+\alpha-2}{2},\frac{|x-x_i|^2}{2}\right)\left(0-\frac{|x-x_i|^2}{2}\right) \\
= &\Phi_{\sigma_\varepsilon}^{\sigma_T} \left(\frac{d+\alpha}{2},\frac{|x-x_i|^2}{2}\right) \frac{|x-x_i|^2}{2} =\Phi_{\sigma_\varepsilon}^{\sigma_T}\left(\frac{d+\alpha}{2},\frac{k^2\sigma_\varepsilon^2}{2}\right)\frac{k^2\sigma_\varepsilon^2}{2},
\end{aligned}
\end{align}
and for $j\neq i$ we use $\big||x_i-x_j|-|x-x_i|\big|\leq|x-x_j|$ to get
\begin{align}\label{sphere2}
\begin{aligned}
&\Phi_{\sigma_\varepsilon}^{\sigma_T}\left(\frac{d+\alpha-2}{2},\frac{|x_i-x_j|^2}{2}\right)-\Phi_{\sigma_\varepsilon}^{\sigma_T}\left(\frac{d+\alpha-2}{2},\frac{|x-x_j|^2}{2}\right)\\
\geq& \frac{\partial\Phi_{\sigma_\varepsilon}^{\sigma_T}}{\partial z}\left(\frac{d+\alpha-2}{2},\frac{|x-x_j|^2}{2}\right)\left(\frac{|x_i-x_j|^2}{2}-\frac{|x-x_j|^2}{2}\right)\\
=& \Phi_{\sigma_\varepsilon}^{\sigma_T}\left(\frac{d+\alpha}{2},\frac{|x-x_j|^2}{2}\right)\left(\frac{|x-x_j|^2}{2}-\frac{|x_i-x_j|^2}{2}\right)\\
\geq& \Phi_{\sigma_\varepsilon}^{\sigma_T}\left(\frac{d+\alpha}{2},\frac{|x-x_j|^2}{2}\right)\bigg(\frac{\big||x_i-x_j|-k\sigma_\varepsilon\big|^2}{2}-\frac{|x_i-x_j|^2}{2}\bigg)\\
\geq& \Phi_{\sigma_\varepsilon}^{\sigma_T}\left(\frac{d+\alpha}{2},\frac{\big||x_i-x_j|-k\sigma_\varepsilon\big|^2}{2}\right)\left( -k\sigma_\varepsilon|x_i-x_j|  \right),
\end{aligned}
\end{align}
where the monotonicity of $\Phi_{\sigma_\varepsilon}^{\sigma_T}$ (Lemma \ref{Lemmadecrease}) was used in the last inequality. We combine \eqref{sphere0}, \eqref{sphere1}, \eqref{sphere2} to deduce that
\begin{align*}
\begin{aligned}
N&(L(x_i)-L(x))\\
&\geq\Phi_{\sigma_\varepsilon}^{\sigma_T}\left(\frac{d+\alpha}{2},\frac{k^2\sigma_\varepsilon^2}{2}\right)\frac{k^2\sigma_\varepsilon^2}{2}-\sum_{j\neq i}\Phi_{\sigma_\varepsilon}^{\sigma_T}\left(\frac{d+\alpha}{2},\frac{\big||x_i-x_j|-k\sigma_\varepsilon\big|^2}{2}\right) k\sigma_\varepsilon|x_i-x_j|\\
&  =:\mathcal J_2.
\end{aligned}
\end{align*}
By Lemma \ref{L4} and Lemma \ref{L5} we have
\[
\lim_{\varepsilon\searrow0}\sigma_\varepsilon^{d+\alpha-2} \mathcal J_2=\frac{2^{\frac{d+\alpha-2}{2}}}{k^{d+\alpha-2}}\int_{0}^{\frac{k^2}{2}} t^{s-1}e^{-t}dt>0.
\]
Hence there exist some constant $\delta_2(d,\alpha,\sigma_T,k, \mathcal X)$ such that $0<\sigma_\varepsilon<\delta_2(d,\alpha,\sigma_T,k, \mathcal X)$ implies $\sigma_\varepsilon^{d+\alpha-2}\mathcal J_2>0$, or equivalently, $\mathcal J_2>0$. In this case, $L(x_i)> L(x)$ on the sphere $|x-x_i|=k\sigma_\varepsilon$.\\

$\bullet$ Step 3: For sufficiently small $\sigma_\varepsilon>0$, $L$ restricted to the open ball  $|x-x_i|<k\sigma_\varepsilon$ has a unique global maximizer $\hat x_i$, which is also a unique stationary point in that region.\\
{\it Proof of Step 3.} Assume that $\sigma_\varepsilon$ is sufficiently small that it satisfies Step 1 and Step 2. Since $L$ is continuous on the compact set $|x-x_i|\leq k\sigma_\varepsilon$, the restriction of $L$ to this compact set attains its maximum, say at $\hat x_i$. By Step 2, the maximum is attained in the interior of the ball, i.e., $|\hat x_i-x_i|<k\sigma_\varepsilon$. Since $\hat x_i$ is a global maximizer in the interior of the domain, it is also a local maximizer, and is a stationary point. Uniqueness of a global maximizer and a stationary point is clear from Step 1.\\

$\bullet$ Step 4: $\hat x_i$ in Step 3 is a unique equilibrium point in the open ball  $U=|x-x_i|<k\sigma_\varepsilon$, and is asymptotically stable. \\
{\it Proof of Step 4.} Set $V(x):=L(\hat x_i)-L(x)$. By uniqueness of the global maximizer $\hat x_i$ of $L|_{U}$, we have $V(\hat x_i)=0$, $V(x)>0$ for $x\in U\setminus \{\hat x_i\}$, and by uniqueness of the stationary point $\hat x_i$ of $L$, we have $s_*(\hat x_i)=0$ and
\[
\nabla V(x)\cdot s_*(x)
=- \frac{|\nabla L(x)|^2}{\frac{1}{N}\sum_{j=1}^N\Phi_{\sigma_\varepsilon}^{\sigma_T}\left(\frac{d+2\alpha-2}{2},\frac{|x-x_j|^2}{2}\right)}
<~0,\quad x\in U\setminus\{\hat x_i\}.
\]
By Proposition \ref{Propclassic}, $\hat x_i$ is asymptotically stable. Uniqueness of the equilibrium point follows from that of the stationary point of $L$.

\bibliographystyle{plainnat}
\bibliography{Kode}
\end{document}